\documentclass[11pt]{article}
\pdfoutput=1
\usepackage{etoolbox,verbatim}
\newtoggle{arxiv}
\toggletrue{arxiv}
\newtoggle{colt}
\togglefalse{colt}
\newtoggle{iclr}
\togglefalse{iclr}
\newtoggle{customthms}
\toggletrue{customthms}

\usepackage[table,dvipsnames]{xcolor}
\usepackage{tcolorbox}
\tcbuselibrary{skins,breakable}

\tcbset{
  aibox/.style={
      width=\linewidth,
      top=6pt,
      bottom=0pt,
      left=1.5pt,
      right=1.5pt,
colback=blue!60!gray!5,
colframe=black,
colbacktitle=black,
      enhanced,
      center,
      attach boxed title to top left={yshift=-0.1in,xshift=0.15in},
      boxed title style={boxrule=0pt,colframe=white,},
    }
}
\newtcolorbox{AIbox}[2][]{aibox,title=#2,#1}

\definecolor{primalcolor}{HTML}{A60000}
\definecolor{contrarycolor}{HTML}{00A6A6}
\definecolor{darkcontrarycolor}{HTML}{004C4C}
\definecolor{lightblue}{HTML}{2970CC}
\definecolor{lightpurple}{HTML}{673147}
\definecolor{ForestGreen}{HTML}{FF5733}
\definecolor{myred}{HTML}{AA4A44}
\definecolor{hyppurple}{HTML}{800080}

\newcommand{\linkcolor}{darkcontrarycolor}
\newcommand{\urlcolor}{darkcontrarycolor}
\newcommand{\citecolor}{darkcontrarycolor}

\newcommand{\thmcolordark}{black}

\usepackage[T1]{fontenc}
\usepackage{capt-of}
\usepackage[font=small,labelfont=bf]{caption}

\usepackage{natbib}
\usepackage{amssymb,upgreek,bm}
\usepackage{mathtools}
\usepackage[english]{babel}  
\usepackage{multirow,tcolorbox,tikz-cd,turnstile,xspace,xparse}
\usepackage{relsize,breakcites,appendix}
\usepackage{longtable,tablefootnote,booktabs,float,makecell}
\usepackage[normalem]{ulem}
\usepackage{tabu}
\usepackage[shortlabels]{enumitem} 
\usepackage{footnote}
\usepackage{boxedminipage}
\usepackage{multirow,nicefrac}
\usepackage{verbatim} %
\usepackage{wrapfig}

\usepackage[colorlinks=true,linkcolor=\linkcolor,urlcolor=\urlcolor,citecolor=\citecolor,breaklinks]{hyperref}

\usepackage{prettyref}

\usepackage[capitalise,nameinlink]{cleveref}

\iftoggle{colt}{
    \DeclareRobustCommand{\qed}{
        \usepackage{thmtools}
          \ifmmode \mathqed
          \else
            \leavevmode\unskip\penalty9999 \hbox{}\nobreak\hfill
            \quad\hbox{\qedsymbol}%
          \fi
    }
}
{
    \usepackage{amsthm}
     
}
\usepackage{thm-restate}

\iftoggle{arxiv}
{
    \usepackage{mathrsfs}
    \usepackage[margin=2cm]{geometry}
    \usepackage[bitstream-charter]{mathdesign}
     
    \usepackage[scaled=0.92]{PTSans}

    \usepackage{subfig}

    \usepackage{fullpage}
    \usepackage[noend]{algpseudocode}
    \usepackage{algorithm}
   
}
{
    \usepackage{mathrsfs}
}

\DeclareMathAlphabet{\mathbfsf}{\encodingdefault}{\sfdefault}{bx}{n}

\numberwithin{equation}{section}

\iftoggle{arxiv}
{
    }
{
    
}
\iftoggle{arxiv}
{
    
}
{
    
}

\usepackage{thmtools}

\Crefname{equation}{Eq.}{Eqs.}
\Crefname{assumption}{Assumption}{Assumptions}
\Crefname{condition}{Condition}{Conditions}
\Crefname{claim}{Claim}{Claims}
\Crefname{property}{Property}{Properties}
\Crefname{construction}{Construction}{Constructions}

\declaretheoremstyle[
    headformat=\normalfont\textcolor{\thmcolordark}{\bfseries\NAME\,\NUMBER}\NOTE,%
    notefont={\normalfont\textcolor{\thmcolordark}{\bfseries}}, 
    notebraces={}{},
    bodyfont=\normalfont\itshape,
    spaceabove = 6pt,
    spacebelow = 6pt,
    ]{coloredthmversion}

\declaretheoremstyle[
    headformat=\normalfont\textcolor{\thmcolordark}{\bfseries\NAME\,\NUMBER}\NOTE,%
    bodyfont=\normalfont\itshape,
    spaceabove = 6pt,
    spacebelow = 6pt,
    ]{coloredthm}

\declaretheoremstyle[
    headformat=\normalfont\textcolor{\thmcolordark}{\bfseries\NAME\,\NUMBER}\NOTE,%
    bodyfont=\normalfont,
    spaceabove = 6pt,
    spacebelow = 6pt,
    ]{coloreddef}

\iftoggle{customthms}
{
    \theoremstyle{coloredthmversion}
}
{}

\iftoggle{customthms}{
  \theoremstyle{coloredthm}
  \newtheorem{theorem}{Theorem}
  \newtheorem{lemma}{Lemma}
  \newtheorem{corollary}{Corollary}
  \newtheorem{proposition}{Proposition}
}
{}

\newtheorem*{thminformal*}{Informal Theorem}

\iftoggle{customthms}{
    \theoremstyle{coloreddef}
    \newtheorem{definition}{Definition}

    \newtheorem{property}{Property}
    
}
{
  
}

\newtheorem{assumption}{Assumption}[section]
\newtheorem{condition}{Condition}[section]

\makeatletter
\newcommand{\neutralize}[1]{\expandafter\let\csname c@#1\endcsname\count@}
\makeatother

\iftoggle{customthms}{
    \newtheoremstyle{named}{}{}{\itshape}{}{\bfseries}{}{.5em}{\Cref{#3} {\normalfont (informal)} }{}
    \theoremstyle{named}
    
    \theoremstyle{plain}
}
{}

\newtheorem*{theorem*}{Theorem}
\newtheorem*{lemma*}{Lemma}
\newtheorem*{corollary*}{Corollary}
\newtheorem*{proposition*}{Proposition}
\newtheorem*{claim*}{Claim}
\newtheorem*{fact*}{Fact}
\newtheorem*{observation*}{Observation}
\newtheorem*{definition*}{Definition}
\newtheorem*{remark*}{Remark}
\newtheorem*{example*}{Example}

\def\ddefloop#1{\ifx\ddefloop#1\else\ddef{#1}\expandafter\ddefloop\fi}
\def\ddef#1{\expandafter\def\csname bb#1\endcsname{\ensuremath{\mathbb{#1}}}}
\ddefloop ABCDEFGHIJKLMNOPQRSTUVWXYZ\ddefloop

\def\ddefloop#1{\ifx\ddefloop#1\else\ddef{#1}\expandafter\ddefloop\fi}
\def\ddef#1{\expandafter\def\csname frak#1\endcsname{\ensuremath{\mathfrak{#1}}}}
\ddefloop ABCDEFGHIJKLMNOPQRSTUVWXYZ\ddefloop

\def\ddefloop#1{\ifx\ddefloop#1\else\ddef{#1}\expandafter\ddefloop\fi}
\def\ddef#1{\expandafter\def\csname fr#1\endcsname{\ensuremath{\mathfrak{#1}}}}
\ddefloop ABCDEFGHIJKLMNOPQRSTUVWXYZ\ddefloop

\def\ddefloop#1{\ifx\ddefloop#1\else\ddef{#1}\expandafter\ddefloop\fi}
\def\ddef#1{\expandafter\def\csname eul#1\endcsname{\ensuremath{\EuScript{#1}}}}
\ddefloop ABCDEFGHIJKLMNOPQRSTUVWXYZ\ddefloop

\def\ddefloop#1{\ifx\ddefloop#1\else\ddef{#1}\expandafter\ddefloop\fi}
\def\ddef#1{\expandafter\def\csname scr#1\endcsname{\ensuremath{\mathscr{#1}}}}
\ddefloop ABCDEFGHIJKLMNOPQRSTUVWXYZ\ddefloop

\def\ddefloop#1{\ifx\ddefloop#1\else\ddef{#1}\expandafter\ddefloop\fi}
\def\ddef#1{\expandafter\def\csname b#1\endcsname{\ensuremath{\mathbf{#1}}}}
\ddefloop ABCDEFGHIJKLMNOPQRSTUVWXYZ\ddefloop

\def\ddefloop#1{\ifx\ddefloop#1\else\ddef{#1}\expandafter\ddefloop\fi}
\def\ddef#1{\expandafter\def\csname bhat#1\endcsname{\ensuremath{\hat{\mathbf{#1}}}}}
\ddefloop ABCDEFGHIJKLMNOPQRSTUVWXYZ\ddefloop

\def\ddefloop#1{\ifx\ddefloop#1\else\ddef{#1}\expandafter\ddefloop\fi}
\def\ddef#1{\expandafter\def\csname btil#1\endcsname{\ensuremath{\tilde{\mathbf{#1}}}}}
\ddefloop ABCDEFGHIJKLMNOPQRSTUVWXYZ\ddefloop

\def\ddefloop#1{\ifx\ddefloop#1\else\ddef{#1}\expandafter\ddefloop\fi}
\def\ddef#1{\expandafter\def\csname bst#1\endcsname{\ensuremath{\mathbf{#1}^\star}}}
\ddefloop ABCDEFGHIJKLMNOPQRSTUVWXYZ\ddefloop

\def\ddefloop#1{\ifx\ddefloop#1\else\ddef{#1}\expandafter\ddefloop\fi}
\def\ddef#1{\expandafter\def\csname bst#1\endcsname{\ensuremath{\mathbf{#1}^\star}}}
\ddefloop abcdeghijklmnopqrstuvwxyz\ddefloop

\def\ddefloop#1{\ifx\ddefloop#1\else\ddef{#1}\expandafter\ddefloop\fi}
\def\ddef#1{\expandafter\def\csname bhat#1\endcsname{\ensuremath{\hat{\mathbf{#1}}}}}
\ddefloop abcdefghijklmnopqrstuvwxyz\ddefloop

\def\ddefloop#1{\ifx\ddefloop#1\else\ddef{#1}\expandafter\ddefloop\fi}
\def\ddef#1{\expandafter\def\csname b#1\endcsname{\ensuremath{\mathbf{#1}}}}
\ddefloop abcdeghijklnopqrstuvwxyz\ddefloop

\def\ddefloop#1{\ifx\ddefloop#1\else\ddef{#1}\expandafter\ddefloop\fi}
\def\ddef#1{\expandafter\def\csname barb#1\endcsname{\ensuremath{\bar{\mathbf{#1}}}}}
\ddefloop abcdefghijklmnopqrstuvwxyz\ddefloop

\def\ddef#1{\expandafter\def\csname c#1\endcsname{\ensuremath{\mathcal{#1}}}}
\ddefloop ABCDEFGHIJKLMNOPQRSTUVWXYZ\ddefloop
\def\ddef#1{\expandafter\def\csname h#1\endcsname{\ensuremath{\widehat{#1}}}}
\ddefloop ABCDEFGHIJKLMNOPQRSTUVWXYZ\ddefloop
\def\ddef#1{\expandafter\def\csname hc#1\endcsname{\ensuremath{\widehat{\mathcal{#1}}}}}
\ddefloop ABCDEFGHIJKLMNOPQRSTUVWXYZ\ddefloop
\def\ddef#1{\expandafter\def\csname t#1\endcsname{\ensuremath{\widetilde{#1}}}}
\ddefloop ABCDEFGHIJKLMNOPQRSTUVWXYZ\ddefloop
\def\ddef#1{\expandafter\def\csname tc#1\endcsname{\ensuremath{\widetilde{\mathcal{#1}}}}}
\ddefloop ABCDEFGHIJKLMNOPQRSTUVWXYZ\ddefloop

\newcommand{\email}[1]{\texttt{#1}}

\Crefname{component}{Component}{Components}
\Crefname{contribution}{Contribution}{Contributions}
\newcommand{\componentref}[1]{%
  \hyperref[#1]{C\ref*{#1}}%
}
\Crefname{claim}{Claim}{Claims}
\Crefname{property}{Property}{Properties}

\iftoggle{iclr}
{
  
}
{
  
}

\makeatletter
\newcommand\addtometadatalist[5][]{%
  \begingroup
  \if\relax#3\relax\def\sep{}\else\def\sep{#5}\fi
  \let\protect\@unexpandable@protect
  \xdef#3{\expandafter{#3}\sep #4[#1]{#2}}%
  \endgroup
}

\newcommand\metadatalist{}
\newcommand\metadataformat[2][]{{\small \textbf{#1:} #2}}
\newcommand\metadata[2][]{\addtometadatalist[#1]{#2}{\metadatalist}{\metadataformat}{\\}}
\makeatother

\newcommand{\paperwebsite}[1]{\metadata[Website]{\url{#1}}}
\newcommand{\papercode}[1]{\metadata[Code]{\url{#1}}}
\newcommand{\paperdocs}[1]{\metadata[Documentation]{\url{#1}}}
\newcommand{\paperblog}[1]{\metadata[Blog]{\url{#1}}}

\usepackage{preamble/color-edits}

\iftoggle{arxiv}
{
\input{preamble/arxiv_title}
}
{}

\usetikzlibrary{calc}
\addauthor{ms}{magenta}
\addauthor{cp}{blue}
\addauthor{nch}{cyan}
\addauthor{ga}{pink}
\addauthor{gs}{green}

\usepackage{graphicx}
\usepackage{booktabs}
\usepackage[accsupp]{axessibility}

\usepackage{orcidlink}

\newcommand{\dd}{\mathrm{d}}

\newcommand{\grad}{\mathrm{grad}}

\newcommand{\argmin}{\operatornamewithlimits{argmin}}

\renewcommand{\first}[1]{\mathbf{#1}}
\renewcommand{\second}[1]{\underline{#1}}

\usepackage{float}
\usepackage{enumitem}

\usepackage{amsmath}
\usepackage{amssymb}
\usepackage{amsfonts}
\usepackage{bm}

\usepackage{algorithm}
\usepackage{algpseudocode}
\algrenewcomment[1]{\hfill$\triangleright$~#1}
\algrenewcommand\algorithmicrequire{\textbf{Input:}}
\algrenewcommand\algorithmicensure{\textbf{Return:}}

\usepackage{multirow}
\usepackage{booktabs}
\usepackage{graphicx}
\usepackage{subcaption}
\usepackage{arydshln}

\usepackage{wrapfig}

\usepackage[table, dvipsnames]{xcolor}

\usepackage{thmtools}
\usepackage{thm-restate}

\usepackage{titletoc}

\tcolorboxenvironment{definition}{
  colback=blue!3,
  colframe=blue!3,
  boxrule=0.5pt,
  arc=6pt,
  left=6pt,
  right=6pt,
  top=5pt,
  bottom=5pt,
  before skip=6pt,
  after skip=6pt
}

\tcolorboxenvironment{proposition}{
  colback=blue!3,
  colframe=blue!3,
  boxrule=0.5pt,
  arc=6pt,
  left=6pt,
  right=6pt,
  top=5pt,
  bottom=5pt,
  before skip=6pt,
  after skip=6pt
}

\tcolorboxenvironment{theorem}{
  colback=blue!3,
  colframe=blue!3,
  boxrule=0.5pt,
  arc=6pt,
  left=6pt,
  right=6pt,
  top=5pt,
  bottom=5pt,
  before skip=6pt,
  after skip=6pt
}

\definecolor{DarkGreen}{rgb}{0.00, 0.40, 0.00}
\definecolor{RoyalBlue}{rgb}{0.25, 0.35, 0.74}
\definecolor{Crimson}{rgb}{0.86, 0.08, 0.24}
\definecolor{ChromeYellow}{rgb}{1.0, 0.65, 0.0}

\begin{document}

\title{DiffRGD: An Inference-Time Diffusion Guidance \\
Through Riemannian Gradient Descent}

\author{
Jia-Wei Liao\textsuperscript{\rm 1,\rm 4}, 
Li-Xuan Peng\textsuperscript{\rm 2}, \\
Mei-Heng Yueh\textsuperscript{\rm 3},
Min Sun\textsuperscript{\rm 2},
Cheng-Fu Chou\textsuperscript{\rm 1},
Jun-Cheng Chen\textsuperscript{\rm 4} \vspace{0.3em} \\
\small
\textsuperscript{\rm 1}National Taiwan University,
\textsuperscript{\rm 2}National Tsing Hua University, \\
\small
\textsuperscript{\rm 3}National Taiwan Normal University,
\textsuperscript{\rm 4}Academia Sinica \\
\small \email{d11922016@csie.ntu.edu.tw, pullpull@citi.sinica.edu.tw}
}

\begin{tcolorbox}[
    colback=blue!5, colframe=blue!5,
    boxrule=0pt,
    arc=2mm%
  ]
  \centering
  \maketitle
  \vspace{-1em}
  \begin{abstract}
Recently, diffusion models have been widely adopted in generative modeling and have served as foundational models for many image generation tasks. To control the generation without costly re-training or fine-tuning, many works seek inference-time guidance methods to steer the latent via a differentiable objective at inference time. However, these methods cannot effectively preserve the original Gaussian distribution because they introduce distributional drift, thereby degrading the sample quality. To address this gap, we propose DiffRGD, a distribution-aware guidance framework that explicitly preserves the latent Gaussian structure. DiffRGD formulates each sampling step as a constrained optimization problem on a spherical manifold induced by the latent Gaussian distribution, and solves it efficiently via Riemannian Gradient Descent (RGD). DiffRGD is a plug-and-play method that can be seamlessly integrated into any pre-trained diffusion model. Extensive experiments demonstrate that DiffRGD outperforms previous methods in most image restoration and conditional generation tasks. Our project page is available at \url{https://diffrgd.github.io/}.
\end{abstract}

  \vspace{-1em}
  \vskip 0.5cm
  \begin{flushright}
    \includegraphics[width=0.05\linewidth]{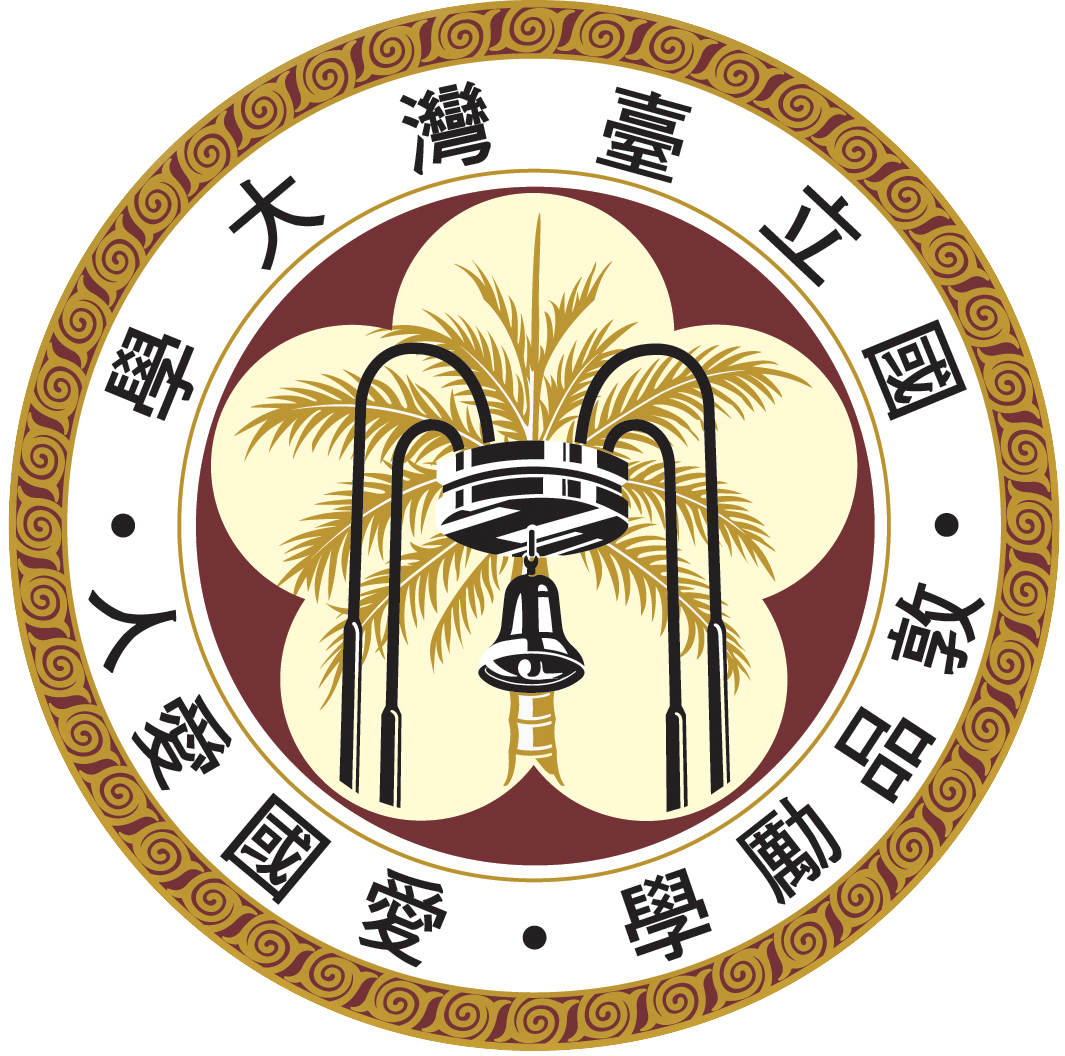}
    \hspace{0.01em}
    \includegraphics[width=0.05\linewidth]{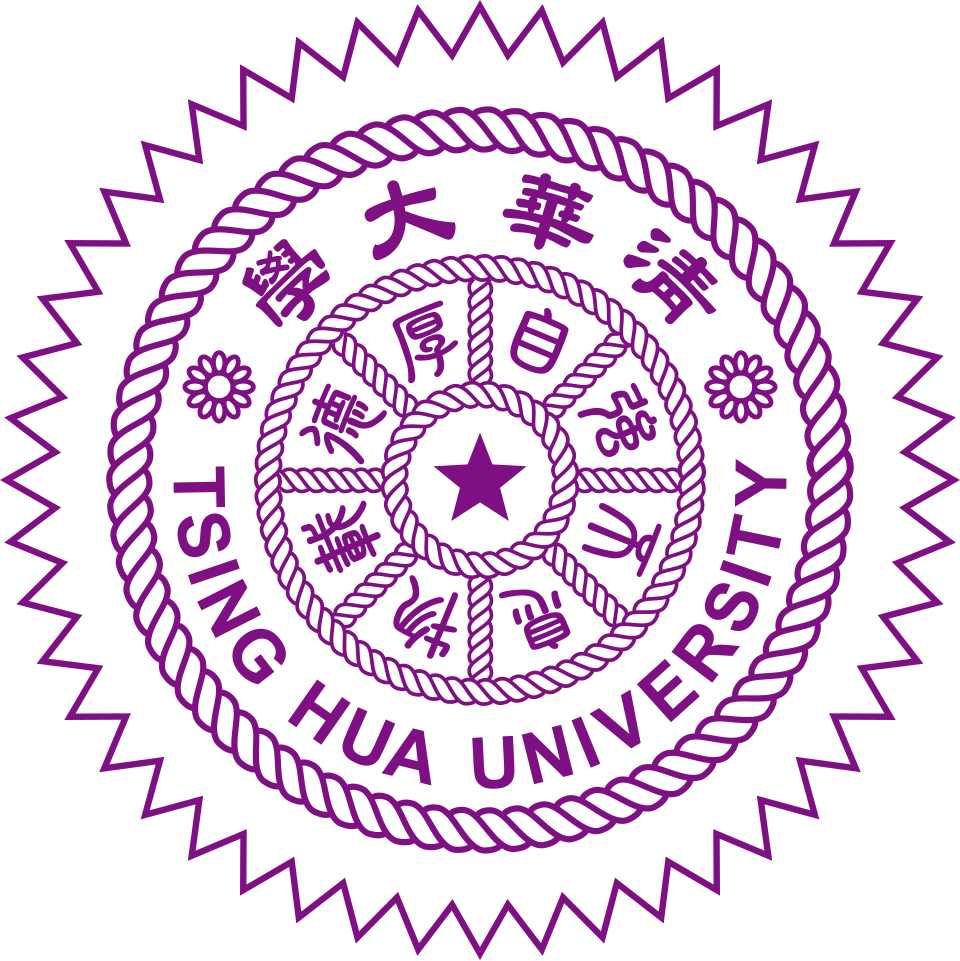}
    \hspace{0.01em}
    \includegraphics[width=0.05\linewidth]{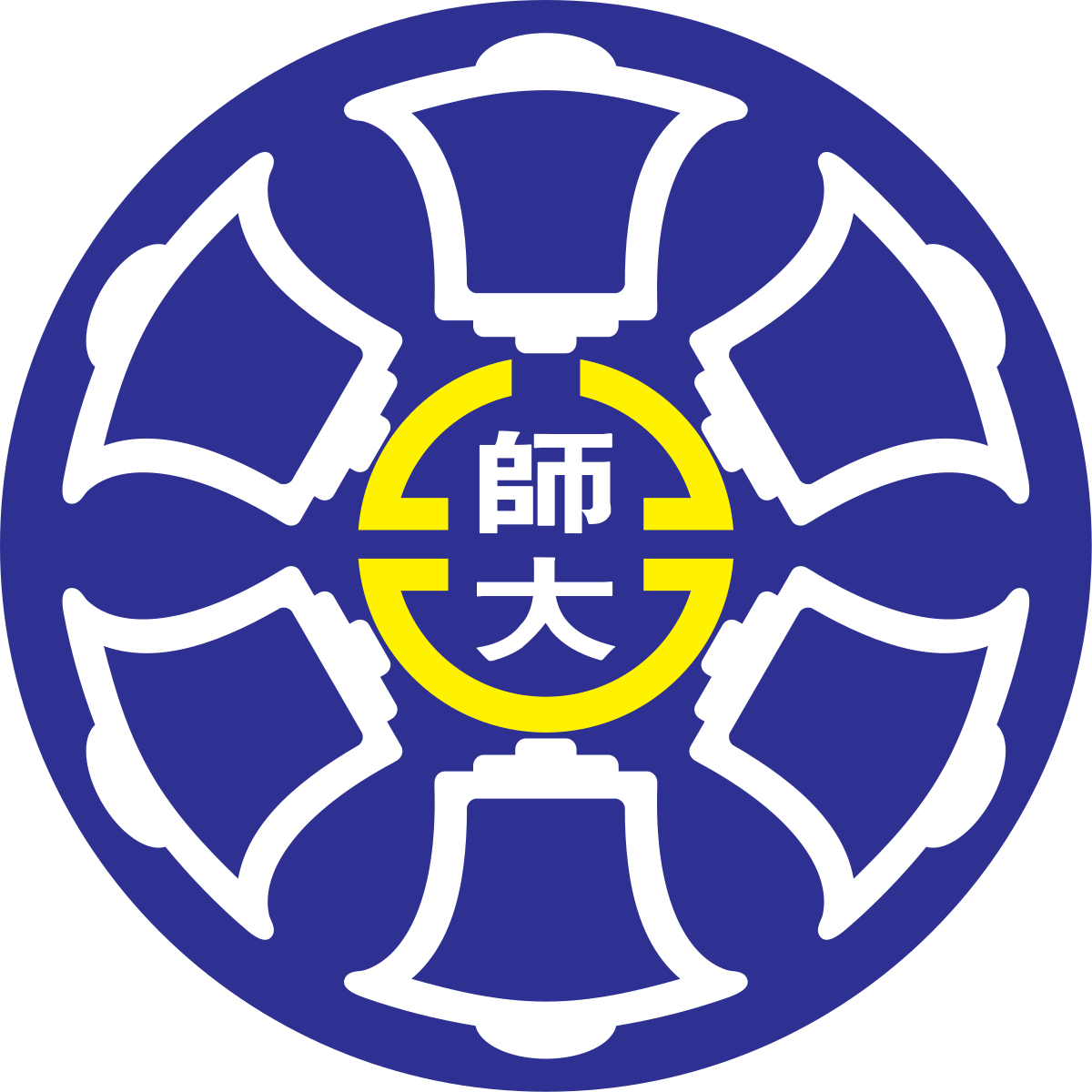}
    \hspace{0.01em}
    \includegraphics[width=0.05\linewidth]{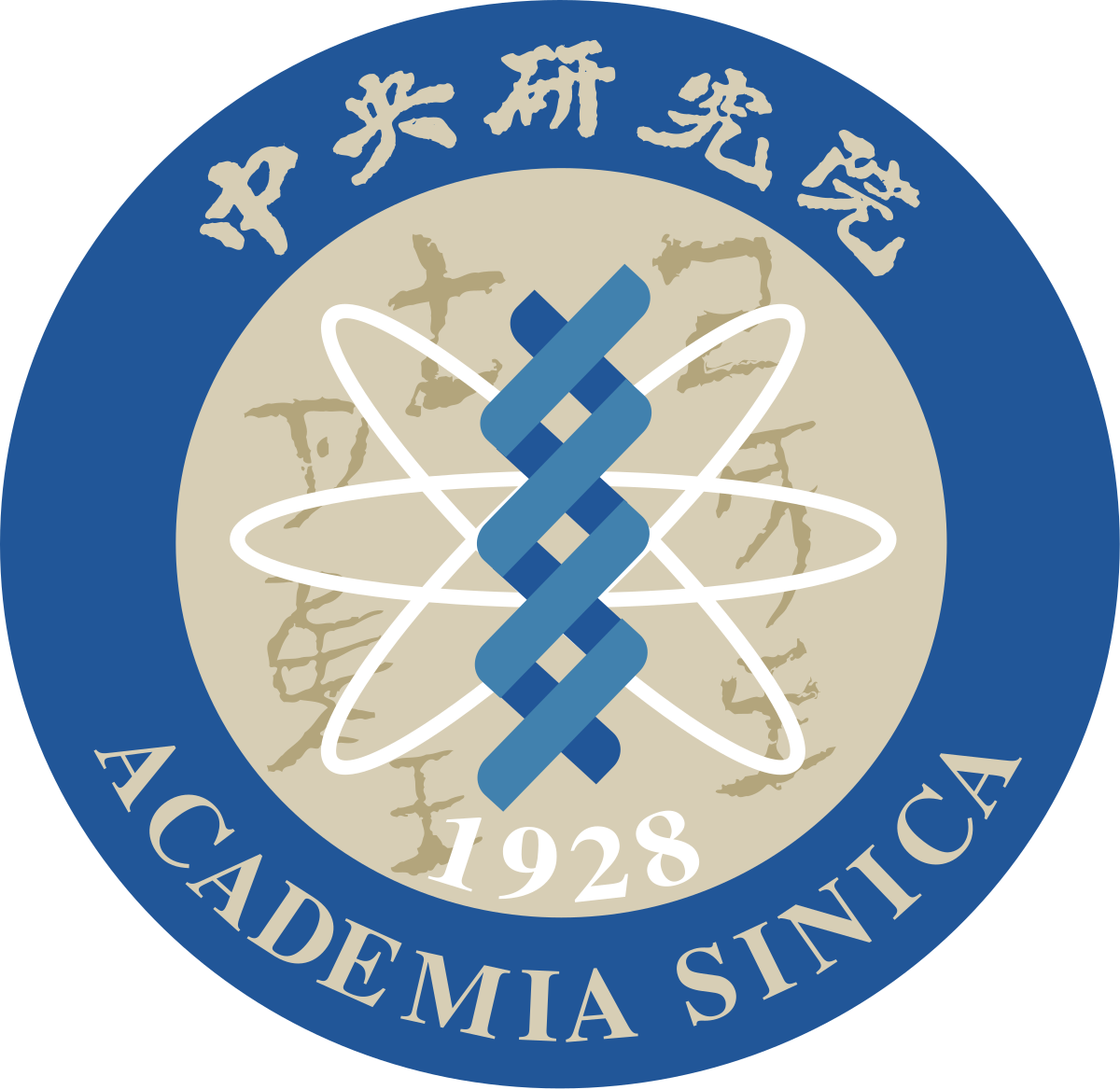}
  \end{flushright}
  \makeatletter
  \ifdefempty{\metadatalist}{}{\metadatalist\par}
  \makeatother
\end{tcolorbox}

\section{Introduction}

Diffusion models~\citep{sohl2015pdm,ho2020ddpm,song2021score,lai2025diffusion} have achieved state‑of‑the‑art performance in unconditional and conditional generation. Considering the prohibitively expensive training efforts of large conditional diffusion models~\citep{ho2021cfg, zhang2023controlnet}, many works seek inference-time guidance, where the sampler is steered by differentiable objectives without any re-training or fine-tuning. Most existing methods~\citep{chung2023dps, he2024mpgd, yang2024dsg} inject external gradients directly into the sampling process ~\citep{dhariwal2021cg}. Specifically, they formulate the guidance for each step as an optimization problem:
\begin{equation}
    \label{eq:general_prob_formulation}
    \begin{aligned}
        \hat{\bm{x}}_{t-1} & \sim p_\theta(\bm{x}_{t-1} | \bm{x}_{t}), \\
        \bm{x}_{t-1} &= \argmin_{\bm{x} \in \mathcal{N}(\hat{\bm{x}}_{t-1})} \mathcal{L}(\hat{\bm{x}}_0(\bm{x}, t-1), \bm{y}).
    \end{aligned}
\end{equation}
Here $\hat{\bm{x}}_{t-1}$ denotes the noisy latent sampled from the reverse distribution $p_\theta(\bm{x}_{t-1} | \bm{x}_{t})$ parameterized by the model $\theta$.
This prediction serves as the reference point for constructing the neighborhood constraint $\mathcal{N}(\hat{\bm{x}}_{t-1})$, which restricts the optimization process as it moves in the guidance direction induced by the guidance loss $\mathcal{L}(\cdot, \bm{y})$. Note that $\hat{\bm{x}}_0(\bm{x}, t-1)$ denotes the model’s estimate of the clean sample (i.e., at timestep $t=0$ obtained from timestep $t-1$); this estimate is used to compute the guidance loss.

\begin{wrapfigure}{r}{0.4\linewidth}
    \centering
    \includegraphics[width=\linewidth]{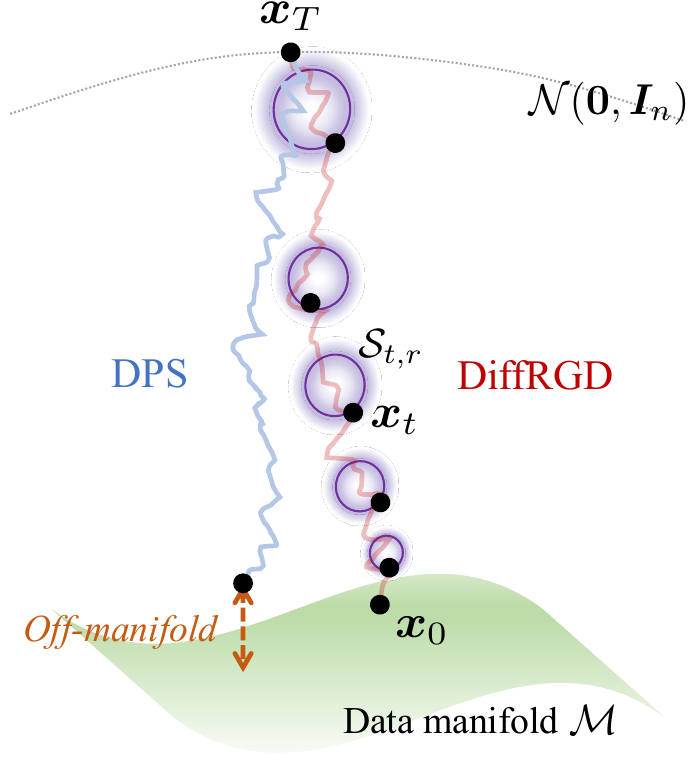}
    \caption{Illustration of our method: Previous works perform guidance without latent-distribution-aware constraints, which can result in the off-manifold phenomena that hinder the sample quality; DiffRGD proposes a latent-distribution-aware geometry constraint tailored to the Gaussian properties of the diffusion model.
    }
    \vspace{-10pt}
    \label{fig:teaser}
\end{wrapfigure}

However, their guidance operates in the ambient space that cannot preserve the stepwise Gaussian latent distribution. This might elicit distributional drifts induced by the Jensen gap~\citep{chung2023dps,yang2024dsg}, which ultimately degrade the clean sample quality. A representative example is DPS~\citep{chung2023dps}, which applies gradients from the guiding loss without any constraints. When the step size is large, the deviation can be exacerbated, whereas a small step size might lead to inadequate controllability (Figure~\ref{fig:teaser}).

These research gaps motivate us to ask: \emph{Can we design latent-distribution-aware guidance that ensures the guided sample lies within the original latent distribution?}

In this paper, we propose \textit{DiffRGD}, an inference-time diffusion guidance framework based on \emph{Riemannian optimization}~\citep{absil2008rgd}. We follow the problem formulation of previous works defined in Equation~\ref{eq:general_prob_formulation} with our proposed novel geometry constraints. Our key insight is that since each DDIM sampling timestep~\citep{song2021ddim} defines an \emph{isotropic Gaussian} of latent distribution, we can construct a suitable constraint that respects the Gaussian properties to perform Riemannian Gradient Descent (RGD). 
DiffRGD constructs the constraint as a spherical manifold induced from the latent distribution via our proposed polar decomposition property. We also prove that our method converges to a stationary point. In addition, akin to previous methods, DiffRGD is a plug-and-play method and can be seamlessly applied to any pre-trained diffusion model.

We conduct extensive experiments, including synthetic data and real-world tasks. In real-world tasks, DiffRGD outperforms previous methods in most image restoration~\citep{lugmayr2022repaint, kawar2022ddrm, saharia2023sr3, wang2023ddnm, zhu2023diffpir, song2023pigdm, chung2022mcg, chung2023dps, he2024mpgd, yang2024dsg, zhang2025admmdiff} (e.g., inpainting, super-resolution, deblurring, denoising) and conditional generation tasks~\citep{yu2023freedom, bansal2024ugd, song2023lgd, he2024mpgd, yang2024dsg, zhang2025admmdiff} (e.g., segmentation map, sketch, FaceID, style). Additionally, we reimplement all the compared methods under the Diffusers framework~\citep{platen2022diffusers}.

Our main contributions are summarized as follows:
\begin{enumerate}
    \item We identify a polar-decomposition property that reveals a distribution-induced geometry for diffusion guidance. Leveraging this insight, we propose a Riemannian guidance framework that respects the latent distribution geometry and mitigates sample degradation caused by off-manifold distribution drift.

    \item We provide an analysis of distribution preservation to validate the effectiveness of the proposed method and prove the convergence of the algorithm. In practice, the proposed geometry-aware optimization converges faster than ADMM-based methods.
    
    \item We demonstrate the effectiveness of DiffRGD across four image restoration tasks and three conditional generation tasks, achieving state-of-the-art performance without re-training or fine-tuning.

\end{enumerate}

\section{Related Work}

\subsection{Diffusion Models for Conditional Generation}
Diffusion models have emerged as a dominant class of generative models due to their strong modeling capacity and sample quality~\citep{sohl2015pdm, ho2020ddpm, song2021score, song2021ddim, karras2022edm}. Recent advancements have enabled scalable and foundational implementations, such as Stable Diffusion~\citep{rombach2022ldm, esser2024sd3} and FLUX~\citep{flux, flux.1-dev-weights}, which have been widely adopted in commercial applications. To enable controllable generation, several guidance strategies have been introduced to steer the sampling process. 

Classifier guidance (CG)~\citep{dhariwal2021cg} utilizes an external classifier to steer the sampling process, while classifier-free guidance (CFG)~\citep{ho2021cfg} achieves guidance by interpolating conditional and unconditional model outputs without requiring an external classifier. These techniques have been widely adopted across diverse downstream applications, including text-to-image synthesis~\citep{rombach2022ldm}, conditional image generation~\citep{nichol2022glide, voynov2023sketchguided, zhang2023controlnet, yu2023freedom, song2023lgd, bansal2024ugd, wu2024text2qr, liao2024diffqrcoder}, and test-time adaptation~\citep{tsai2024gda}.

\subsection{Inference-Time Diffusion Guidance}
Despite the powerful capability of diffusion models, their training demands substantial computational resources and large-scale datasets. Fine-tuning or re-training diffusion models for every new downstream task or modality is prohibitively expensive and leads to high maintenance overhead. To address this, several works have proposed inference-time guidance methods that refine the latent during inference time without modifying the original model parameters.

\paragraph{DPS.}
DPS~\citep{chung2023dps} derives the guidance by maximizing posterior likelihood from the inverse problem, and FreeDoM~\citep{yu2023freedom} derives the guidance by minimizing an energy function that measures the conditional alignment. They both share a similar update rule that does not constrain the updates to respect the latent distribution. Although they are intuitive and effective, they often suffer from off-manifold updates if the step size is too large or insufficient guidance when the step size is too small, both of which can degrade the final clean sample quality.

\paragraph{MPGD.}
To alleviate the off-manifold issue, MPGD~\citep{he2024mpgd} proposes a Projected Gradient Descent (PGD) framework with a linear manifold hypothesis. At each timestep, it estimates the clean data $\bm{x}_0$, projects the guiding gradient onto the tangent space of the data manifold using a VAE encoder, and updates the latent accordingly. While this projection mitigates deviation from the data manifold, it assumes local linearity and introduces instability due to the reliance on a perfect encoder, which in practice is infeasible. This limits its general applicability.

\paragraph{DSG.}
Alternatively, DSG~\citep{yang2024dsg} leverages a spherical Gaussian structure and projects the guided latent to a fixed radius hypersphere. 
However, their proposed hypersphere cannot fully adhere to the latent distribution's Gaussian properties, even degrading to a uniform distribution. In contrast, we propose a polar decomposition directly from the latent and a radius sampling technique to maintain the Gaussian properties for the optimization constraints.

\paragraph{ADMMDiff.}
Another line of work, such as ADMMDiff~\citep{zhang2025admmdiff}, formulates diffusion guidance as a constrained optimization problem and incorporates the Alternating Direction Method of Multipliers (ADMM)~\citep{gabay1976admm, chan2016pnpadmm, wang2017parameter} into the sampling process. While theoretically grounded, it utilizes the proximal operator and inner-loop iteration, leading to slow convergence. It also requires careful tuning of multiple hyperparameters, making it less flexible in various applications.

We seek to address the research gap by proposing a more suitable geometric constraint that is directly induced from the original latent distribution and solving it via Riemannian Gradient Descent~\citep{absil2008rgd}, which provides efficient computation with a convergence guarantee.

\section{Method}

Denoising Diffusion Probabilistic Models (DDPMs)~\citep{ho2020ddpm} generate data by learning to reverse a forward noising process, which can equivalently be viewed as learning to sample across different levels of noise-perturbed data distributions~\citep{song2019ncsn, song2021score}. DDIM~\citep{song2021ddim} generalized the DDPM formulation with parameters $\sigma_t$ controlling the noise level during sampling to enable sample acceleration and a unified formulation. Here, we follow their formulation to explain our motivation. Starting from a noisy latent $\bm{x}_t$ at timestep $t$, DDIM defines the next-step latent $\bm{x}_{t-1}$ during sampling, i.e., denoising, as sampling from a Gaussian distribution parameterized by the model prediction. 
\begin{equation*}
    \bm{x}_{t-1} \sim \mathcal{N}(\bm{\mu}_t, \sigma_t^2 \bm{I}_n),
\end{equation*}
where the mean is defined by
\begin{equation*}
    \bm{\mu}_t := \sqrt{\bar{\alpha}_{t-1}} \hat{\bm{x}}_0(\bm{x}_t, t) + \sqrt{1 - \bar{\alpha}_{t-1} - \sigma_t^2} \bm{\epsilon}_\theta(\bm{x}_t, t),
\end{equation*}
with given DDIM sampling schedule parameters $\{\bar{\alpha}_t\}_{t=1}^T$
and the $\hat{\bm{x}}_0(\bm{x}_t, t)$ is the predicted clean sample, estimated by Tweedie’s formula~\citep{efron2011tweedie}:
\begin{equation*}
    \hat{\bm{x}}_0(\bm{x}_t, t) := \frac{1}{\sqrt{\bar{\alpha}_t}} \left( \bm{x}_t - \sqrt{1 - \bar{\alpha}_t} \bm{\epsilon}_\theta(\bm{x}_t, t) \right),
\end{equation*}
in which the $\bm{\epsilon}_\theta(\bm{x}_t, t)$ denotes the predicted noise from a
diffusion model.

In the unconditional sampling settings, the latents across sampling timesteps will form an isotropic Gaussian coupled with the timestep $t$. However, when applying inference-time conditional sampling, the guidance often deviates the latent from the original Gaussian distribution. As the sampling step proceeds, the deviation will accumulate, making the final clean sample deviate from the data manifold, resulting in lower quality. To address the off-manifold problem, we propose DiffRGD, an inference-time guidance method based on Riemannian Gradient Descent (RGD)~\citep{absil2008rgd}. Our key insight is that for each sampling timestep, the latent follows a time-dependent Gaussian distribution, and by the properties of an isotropic Gaussian, we can construct a distribution-induced spherical manifold as a proxy to perform RGD while ensuring the optimized latent still lies within the original Gaussian distribution. The list of notations used in this section is provided in the Appendix~\ref{appendix:notation}.

\subsection{Polar Decomposition of Isotropic Gaussian}
In this section, we characterize the geometric structure induced by the latent distribution at each diffusion timestep. We begin by showing that the isotropic Gaussian latent admits a polar decomposition into statistically independent radial and directional components, as stated in the following proposition.

\begin{proposition}[Polar Decomposition of Isotropic Gaussian]~\label{props:polar_decom}
At each diffusion timestep $t-1$, the latent $\bm{x}_{t-1} \sim \mathcal{N}(\bm{\mu}_t, \sigma_t^2 \bm{I}_n)$ follows an isotropic Gaussian distribution centered at $\bm{\mu}_t$. It admits a polar decomposition:
\begin{equation*}
\bm{x}_{t-1} = \bm{\mu}_t + \sigma_t r \bm{u},
\end{equation*}
where $r \sim \chi(n)$ is a chi-distributed random variable with $n$ degrees of freedom, and $\bm{u} \sim \operatorname{Unif}(\mathbb{S}^{n-1})$ is a unit vector uniformly distributed on the sphere. $r$ and $\bm{u}$ are statistically independent.
\end{proposition}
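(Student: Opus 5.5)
We first standardize: set $\bm{z} := (\bm{x}_{t-1} - \bm{\mu}_t)/\sigma_t$, which by the affine invariance of Gaussians satisfies $\bm{z} \sim \mathcal{N}(\bm{0}, \bm{I}_n)$. It then suffices to show $\bm{z} = r\bm{u}$ with $r := \|\bm{z}\|_2$ and $\bm{u} := \bm{z}/\|\bm{z}\|_2$, where $r \sim \chi(n)$, $\bm{u} \sim \operatorname{Unif}(\mathbb{S}^{n-1})$, and the two are independent. The decomposition $\bm{x}_{t-1} = \bm{\mu}_t + \sigma_t r \bm{u}$ holds by definition. The event $\bm{z} = \bm{0}$ has probability zero, so $\bm{u}$ is almost surely well defined.

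\textbf{Radial law.} We have $r^2 = \sum_{i=1}^n z_i^2$, a sum of $n$ independent squared standard normals. Hence $r^2 \sim \chi^2(n)$, and so $r \sim \chi(n)$ by definition of the chi distribution.

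\textbf{Independence and directional law.} We use polar coordinates. The standard normal density $(2\pi)^{-n/2} e^{-\|\bm{z}\|^2/2}$ depends on $\bm{z}$ only through $\|\bm{z}\|$. Under the change of variables $\bm{z} = \rho\bm{v}$ with $\rho > 0$ and $\bm{v} \in \mathbb{S}^{n-1}$, Lebesgue measure factors as $\dd \bm{z} = \rho^{n-1}\,\dd\rho\,\dd\mathcal{H}^{n-1}(\bm{v})$, where $\mathcal{H}^{n-1}$ is surface measure on the sphere. For Borel sets $A \subset (0,\infty)$ and $B \subset \mathbb{S}^{n-1}$ this gives
\begin{equation*}
\Pr(r \in A, \bm{u} \in B) = \left(\int_A (2\pi)^{-n/2}\rho^{n-1} e^{-\rho^2/2}\,\dd\rho\right)\mathcal{H}^{n-1}(B).
\end{equation*}
The right-hand side is a product of a function of $A$ and a function of $B$. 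Normalizing each factor shows three things at once:
\begin{itemize}
\item $r$ and $\bm{u}$ are independent;
\item $\bm{u}$ has law $\mathcal{H}^{n-1}(\cdot)/\mathcal{H}^{n-1}(\mathbb{S}^{n-1})$, which is $\operatorname{Unif}(\mathbb{S}^{n-1})$;
\item $r$ has density proportional to $\rho^{n-1}e^{-\rho^2/2}$, the $\chi(n)$ density, which re-confirms the radial step.
\end{itemize}

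An alternative route avoids the coordinate formula by using rotation invariance. For any orthogonal $\bm{Q}$ we have $\bm{Q}\bm{z} \overset{d}{=} \bm{z}$, so conditional on $r$, the law of $\bm{u}$ is a rotation-invariant probability measure on $\mathbb{S}^{n-1}$. By uniqueness of such a measure, it is the uniform law for every value of $r$. This yields both uniformity and independence.

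\textbf{Main obstacle.} The only nontrivial step is the rigorous independence argument. Specifically, one must either justify the polar-coordinate factorization of Lebesgue measure (a standard fact we will cite) or invoke the uniqueness of rotation-invariant probability measures on the sphere. We will take the polar-coordinate route as primary because it delivers all three claims in a single computation.
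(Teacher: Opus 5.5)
Your proposal is correct and follows essentially the same route as the paper: standardize to $\bm{z}\sim\mathcal{N}(\bm{0},\bm{I}_n)$, pass to polar coordinates with $\dd\bm{z}=\rho^{n-1}\,\dd\rho\,\dd\bm{u}$, and read off both the $\chi(n)$ law of $r$ and the uniform, $r$-independent law of $\bm{u}$ from the factorized joint density. Your factorization over product sets $A\times B$, and the optional rotation-invariance remark, are slightly more careful packaging than the paper's marginal/conditional-density computation, but the argument is the same.
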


\begin{wrapfigure}{r}{0.42\linewidth}
    \centering
    \includegraphics[width=0.95\linewidth]{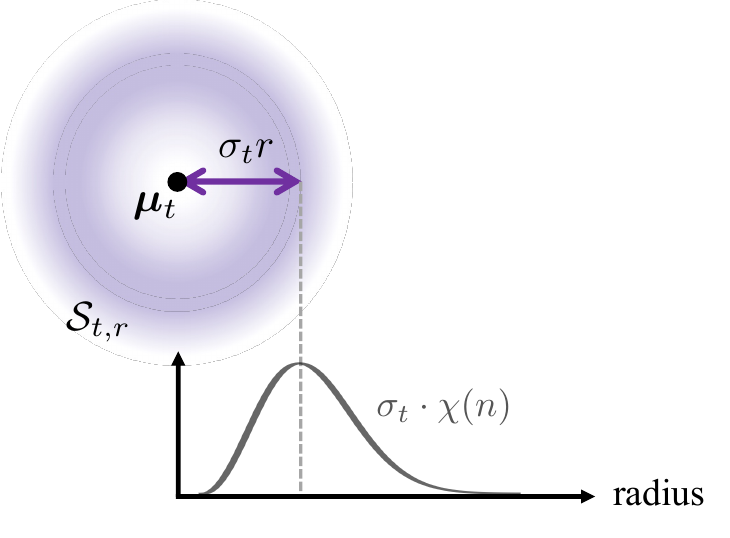}
    \vspace{-10pt}
    \caption{
    Gaussian distribution-aware spherical manifold construction.
    }
    \label{fig:chi_dist}
    \vspace*{-15pt}
\end{wrapfigure}

The insight of Proposition~\ref{props:polar_decom} is that we aim to construct the spherical manifold based on the same density as the original Gaussian of $\bm{x}_{t-1}$. The radius $\sigma_tr$ follows a scaled-chi distribution centered close to $\sqrt{n}\sigma_t$ with two-sided probability decreasing. We show the conceptual illustration in Figure~\ref{fig:chi_dist}. We also provide an asymptotic analysis for our scaled-chi distribution behavior when $n$ is large in the Appendix~\ref{appendix:math}.

This polar decomposition decouples the sampled $r$ and direction $\bm{u}$. Here, we propose to fix the sampled $r$ to construct the spherical manifold as the feasible set for RGD. The goal is to find a suitable direction $\bm{u}$ toward the guidance. Specifically, the spherical manifold is defined as:
\begin{equation}~\label{eq:spherical_manifold}
    \mathcal{S}_{t, r} := \left\{ \bm{x} \in \mathbb{R}^n \mid \| \bm{x} - \bm{\mu}_t \|_2 = \sigma_t r \right\}.
\end{equation}

In contrast to previous unconstrained gradient updates~\citep{chung2023dps, yu2023freedom}, our constraint could preserve the distribution-induced geometry of the diffusion process while allowing the directional refinement toward the guidance direction.

\paragraph{Problem Formulation.}
We follow previous works~\citep{he2024mpgd} to formulate the per-timestep guidance as a constrained optimization problem. Given a conditional input $\bm{y}$, we formulate the latent refinement at each timestep as:
\begin{equation}~\label{eq:constraint_optim_prob}
    \bm{x}_{t-1} = \argmin_{\bm{x} \in \mathcal{S}_{t, r}} \mathcal{L}(\hat{\bm{x}}_0(\bm{x}, t-1), \bm{y}).
\end{equation}
Since we constrain the optimization respecting the feasible set defined in Equation~\ref{eq:spherical_manifold}, this optimization naturally respects the Gaussian properties of the diffusion process while improving the sample alignment toward the condition $\bm{y}$.

\begin{algorithm}[t]
    \caption{DiffRGD}~\label{algo:diffrgd}
    \begin{algorithmic}[1] 
        \Require{
        Reference image $\bm{y} \in \mathbb{R}^n$,
        Diffusion model $\bm{\epsilon}_\theta$,
        loss function $\mathcal{L}$,
        DDIM sampling schedule parameters $\{\bar{\alpha}_t\}_{t=1}^T$,
        noise level $\sigma_t > 0$,
        guidance strengths $\{\eta_t^{(k)}\}_{t=1}^T$,
        guidance inner iteration $K \in \mathbb{N}$.
        }
        \State{$\bm{x}_T \sim \mathcal{N}(\bm{0}, \bm{I}_n)$.}
        \For{$t=T$ to $1$}
            \State{$r \sim \chi(n)$.}
            \State{$\bm{u} \sim \operatorname{Unif}(\mathbb{S}^{n-1})$.}
            \State{$\bm{\mu}_t \leftarrow \sqrt{\bar{\alpha}_{t-1}} \hat{\bm{x}}_{0}(\bm{x}_t, t)+ \sqrt{1 - \bar{\alpha}_{t-1} - \sigma_t^2} \, \bm{\epsilon}_\theta(\bm{x}_t, t)$.}
            \State{$\bm{x}_{t-1}^{(0)} \leftarrow \bm{\mu}_t + \sigma_t r \bm{u}$.}
            \For{$k=0$ to $K-1$}
                \State{$\bm{g} \leftarrow \nabla_{\bm{x}_{t-1}^{(k)}} \mathcal{L}(\hat{\bm{x}}_{0}(\bm{x}_{t-1}^{(k)}, t-1), \bm{y})$.
                \Comment{Guidance gradient}
                }
                \State{$\grad_{\mathcal{S}_{t, r}} \mathcal{L} \leftarrow \mathrm{\Pi}_{\operatorname{T}_{\bm{x}_{t-1}^{(k)}}{\mathcal{S}_{t, r}}}(\bm{g})$.
                \Comment{Riemannian gradient}
                }
                \State{$\bm{x}_{t-1}^{(k+1)} \leftarrow \mathrm{R}_{\bm{x}_{t-1}^{(k)}}(-\eta_t^{(k)} \operatorname{grad}_{\mathcal{S}_{t, r}} \mathcal{L})$.
                \Comment{Retraction}
                }
            \EndFor
            \State{$\bm{x}_{t-1} \leftarrow \bm{x}_{t-1}^{(K)}$.}
        \EndFor
    \Ensure{Guided sample $\bm{x}_0$.}
    \end{algorithmic}
\end{algorithm}

\subsection{Riemannian Gradient Descent for Diffusion Guidance}
To solve the constrained optimization problem in  Equation~\ref{eq:constraint_optim_prob}, we employ Riemannian Gradient Descent (RGD) on the spherical manifold $\mathcal{S}_{t, r}$. We first define the geometric components for the optimization step as follows:

\paragraph{Tangent Space.}
The tangent space of the spherical manifold $\mathcal{S}_{t, r}$ at a point $\bm{x} \in \mathcal{S}_{t, r}$ consists of all directions orthogonal to the radial vector:
\begin{equation*}
    \operatorname{T}_{\bm{x}}{\mathcal{S}_{t, r}} := \left\{ \bm{v} \in \mathbb{R}^n \mid (\bm{x} - \bm{\mu}_t)^\top \bm{v} = 0 \right\}.
\end{equation*}

\paragraph{Riemannian Metric.}
We endow the manifold $\mathcal{S}_{t, r}$ with the standard Riemannian metric induced by the Euclidean inner product. For any two vectors in the tangent space $\bm{v}, \bm{w} \in \operatorname{T}_{\bm{x}}\mathcal{S}_{t, r}$, the inner product is defined as:
\begin{equation*}
    \langle \bm{v}, \bm{w} \rangle_{\bm{x}} := \bm{v}^\top \bm{w}.
\end{equation*}

\paragraph{Projection Operator.}
The projection of a vector $\bm{v} \in \mathbb{R}^n$ onto the tangent space $\operatorname{T}_{\bm{x}}{\mathcal{S}_{t, r}}$ is given by:
\begin{equation*}
    \mathrm{\Pi}_{\operatorname{T}_{\bm{x}}{\mathcal{S}_{t, r}}}(\bm{v}) := \left( \bm{I}_n - \frac{(\bm{x} - \bm{\mu}_t)(\bm{x} - \bm{\mu}_t)^\top}{\|\bm{x} - \bm{\mu}_t\|_2^2} \right) \bm{v},
\end{equation*}
This projection removes the radial part of $\bm{v}$ to ensure the update direction remains tangent to the spherical manifold.

\paragraph{Retraction.}
After taking a step along the tangent space, the resulting latent may leave the spherical manifold. We apply a retraction operation to map the updated latent back to the spherical manifold by:
\begin{equation*}
    \operatorname{R}_{\bm{x}}(\bm{v}) := \bm{\mu}_t + \sigma_t r \cdot \frac{\bm{x} - \bm{\mu}_t + \bm{v}}{\|\bm{x} - \bm{\mu}_t + \bm{v}\|_2},
\end{equation*}
which ensures the latent updates lying on the same spherical manifold specified by the radius $\sigma_t r$.

\begin{wrapfigure}{r}{0.45\linewidth}
    \centering
    \vspace{-10pt}
    \includegraphics[width=\linewidth]{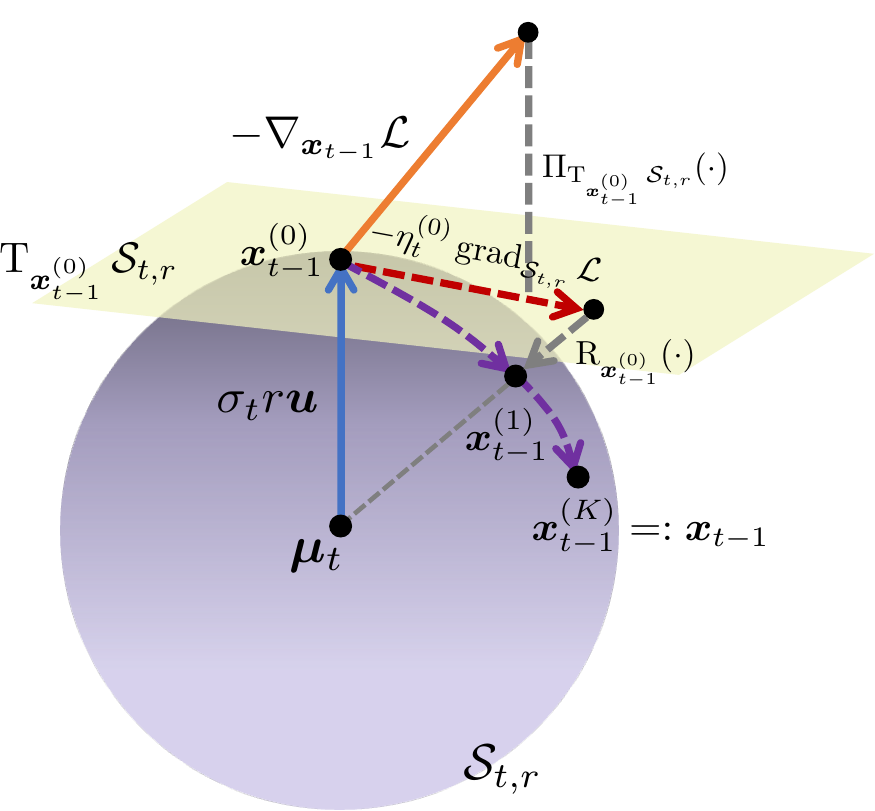}
    \caption{An illustration of DiffRGD.
    }
    \label{fig:rgd}
    \vspace{-10pt}
\end{wrapfigure}

Fig.~\ref{fig:rgd} shows the guidance step in DiffRGD. For each diffusion timestep $t$, given the predicted mean $\bm{\mu}_t$ by DDIM and the radius $\sigma_t r$, the latent $\bm{x}_{t-1}^{(0)}$ lies on the constructed spherical manifold $\mathcal{S}_{t, r}$ before applying guidance. Starting from this initial latent, we perform $K$ Riemannian optimization steps to refine the latent by the guiding loss while ensuring it lies on the constructed manifold.

In each Riemannian optimization iteration, we compute the negative Euclidean gradient $-\nabla_{\bm{x}_{t-1}^{(k)}} \mathcal{L}$, where 
$\mathcal{L}(\cdot, \bm{y})$ denotes the guidance objective conditioned on $\bm{y}$. Then, we use this gradient to obtain the Riemannian gradient and perform retraction for the guidance update. Specifically, to respect the spherical constraint, this gradient is first projected onto the tangent space $\operatorname{T}_{\bm{x}_{t-1}^{(k)}} \mathcal{S}_{t, r}$, yielding the Riemannian gradient:
\begin{equation*}
    \operatorname{grad}_{\mathcal{S}_{t, r}} \mathcal{L}
    :=
    \mathrm{\Pi}_{\operatorname{T}_{\bm{x}_{t-1}^{(k)}}{\mathcal{S}_{t, r}}}
    \left(
        \nabla_{\bm{x}_{t-1}^{(k)}} 
        \mathcal{L}(\hat{\bm{x}}_0(\bm{x}_{t-1}^{(k)}, t-1), \bm{y})
    \right).
\end{equation*}
We then update the latent along the negative Riemannian gradient direction with guidance strength $\eta_t^{(k)} > 0$ via a retraction-based step:
\begin{equation*}
    \bm{x}_{t-1}^{(k+1)}
    =
    \operatorname{R}_{\bm{x}_{t-1}^{(k)}}
    \left(
        -\eta_{t}^{(k)} 
        \operatorname{grad}_{\mathcal{S}_{t, r}} \mathcal{L}
    \right).
\end{equation*}
The retraction ensures that each update remains on the spherical manifold $\mathcal{S}_{t, r}$, preventing the radial deviation that naive Euclidean gradient updates would otherwise introduce. After $K$ iterations, we obtain the optimized latent 
$\bm{x}_{t-1}^{(K)}$, which serves as the final guided sample $\bm{x}_{t-1}$.

The full algorithm is shown in Algorithm~\ref{algo:diffrgd}. This framework ensures each optimization step remains on the spherical manifold induced by the isotropic Gaussian, thereby preserving the validity of the sampling process and mitigating quality degradation caused by off-manifold phenomena. A detailed analysis of the distribution-preserving property is provided in the Appendix~\ref{appendix:distribution}.

\subsection{Convergence Analysis}
To ensure the stability and reliability of our guidance framework, it is essential to study the convergence behavior of the optimization steps. In this section, we theoretically justify the convergence of our optimization loop. In Theorem~\ref{theorem:convergence}, we show that at each diffusion sampling step, our method asymptotically converges to a stationary point.

\begin{theorem}[Convergence to Stationary Point of DiffRGD] \label{theorem:convergence}
Let $\bm{f}_t : \mathcal{S}_{t, r} \to \mathbb{R}$ be the guidance objective at timestep $t$, defined as $\bm{f}_t(\bm{x}) := \mathcal{L}(\hat{\bm{x}}_0(\bm{x}, t-1), \bm{y})$, and let $\{ \bm{x}_{t-1}^{(k)} \}$ denote the sequence produced by Algorithm 1 at timestep $t$. Assume that $\bm{f}_t$ is $L_t$-smooth on the spherical manifold $\mathcal{S}_{t, r}$, and the guidance strength $0< \eta_\mathrm{min} \leq \eta_{t}^{(k)} \leq \frac{1}{L_t}$. Then the norm of the Riemannian gradient satisfies
\begin{equation*}
    \left\|\operatorname{grad}_{\mathcal{S}_{t, r}} \bm{f}_t(\bm{x}_{t-1}^{(k)}) \right\|_2 \to 0 \text{ as } k \to \infty.
\end{equation*}
with a convergence rate $\mathcal{O}(\frac{1}{\sqrt{k}})$.
\end{theorem}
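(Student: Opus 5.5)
The plan is the standard sufficient-decrease argument for Riemannian gradient descent (Absil et al.; Boumal), specialized to the sphere $\mathcal{S}_{t,r}$ and the retraction $\operatorname{R}$ defined above. Write $\bm{x}_k := \bm{x}_{t-1}^{(k)}$, $\bm{g}_k := \operatorname{grad}_{\mathcal{S}_{t,r}} \bm{f}_t(\bm{x}_k)$ and $\eta_k := \eta_t^{(k)}$.

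We interpret ``$L_t$-smooth on $\mathcal{S}_{t,r}$'' in the retraction sense commonly used for RGD. That is, we assume that for every $\bm{x}\in\mathcal{S}_{t,r}$ and every $\bm{v}\in \operatorname{T}_{\bm{x}}\mathcal{S}_{t,r}$,
\begin{equation*}
    \bm{f}_t(\operatorname{R}_{\bm{x}}(\bm{v})) \le \bm{f}_t(\bm{x}) + \langle \operatorname{grad}_{\mathcal{S}_{t,r}}\bm{f}_t(\bm{x}), \bm{v}\rangle_{\bm{x}} + \tfrac{L_t}{2}\|\bm{v}\|_2^2 .
\end{equation*}
If the authors instead mean Euclidean $L$-smoothness of $\mathcal{L}\circ\hat{\bm{x}}_0$ on a neighborhood of the sphere, this inequality must be derived first, and that derivation is the main obstacle. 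The ingredients would be as follows. The metric projection retraction satisfies $\|\operatorname{R}_{\bm{x}}(\bm{v}) - \bm{x} - \bm{v}\|_2 \le \|\bm{v}\|_2^2/(2\sigma_t r)$, since $\|\bm{x}-\bm{\mu}_t+\bm{v}\|_2 = \sqrt{(\sigma_t r)^2+\|\bm{v}\|_2^2}$ for tangent $\bm{v}$. The Euclidean gradient is bounded on the compact sphere. The radial part of $\nabla \bm{f}_t$ is orthogonal to $\bm{v}$. Together these give the inequality with a modified constant $L_t' = L_t + \sup\|\nabla\bm{f}_t\|_2/(\sigma_t r)$. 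I would state the proof with the constant $L_t$ as in the theorem and note this adjustment.

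Next, plug $\bm{v} = -\eta_k \bm{g}_k$ into the inequality. This is legitimate because $\bm{g}_k$ is tangent by construction of the projection $\mathrm{\Pi}$. We get
\begin{equation*}
    \bm{f}_t(\bm{x}_{k+1}) \le \bm{f}_t(\bm{x}_k) - \eta_k\Bigl(1-\tfrac{L_t\eta_k}{2}\Bigr)\|\bm{g}_k\|_2^2 \le \bm{f}_t(\bm{x}_k) - \tfrac{\eta_{\min}}{2}\|\bm{g}_k\|_2^2,
\end{equation*}
using $\eta_{\min}\le\eta_k\le 1/L_t$. Here we also use that $\langle\bm{g}_k, \nabla\bm{f}_t(\bm{x}_k)\rangle = \|\bm{g}_k\|_2^2$, because $\mathrm{\Pi}$ is an orthogonal projection.

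Then telescope from $0$ to $k-1$. Since $\mathcal{S}_{t,r}$ is compact and $\bm{f}_t$ is continuous, $\bm{f}_t^\ast := \inf_{\mathcal{S}_{t,r}}\bm{f}_t > -\infty$. This gives
\begin{equation*}
    \tfrac{\eta_{\min}}{2}\sum_{j=0}^{k-1}\|\bm{g}_j\|_2^2 \le \bm{f}_t(\bm{x}_0)-\bm{f}_t^\ast .
\end{equation*}
The series $\sum_j\|\bm{g}_j\|_2^2$ is therefore finite, so $\|\bm{g}_k\|_2\to0$. For the rate,
\begin{equation*}
    \min_{0\le j<k}\|\bm{g}_j\|_2 \le \sqrt{\frac{2(\bm{f}_t(\bm{x}_0)-\bm{f}_t^\ast)}{\eta_{\min}k}} = \mathcal{O}(1/\sqrt{k}).
\end{equation*}
The $\mathcal{O}(1/\sqrt{k})$ rate holds for the best iterate, which is the standard sense of the claim; I would state it that way rather than as a last-iterate bound.
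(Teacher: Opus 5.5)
Your proof is correct and follows the same route as the paper: the RGD descent lemma with $\eta_t^{(k)} \le 1/L_t$ gives sufficient decrease, you telescope against $\inf_{\mathcal{S}_{t,r}} \bm{f}_t$, and you finish with a min-over-iterates bound of order $1/\sqrt{k}$. Your version is slightly more careful than the paper's on three points: you deduce $\|\bm{g}_k\|_2 \to 0$ from summability of $\sum_j \|\bm{g}_j\|_2^2$, whereas the paper infers it from the existence of a single good iterate $k^*$; you state explicitly that the rate is a best-iterate rate; and you spell out the retraction-sense smoothness that the paper's appeal to the standard descent lemma implicitly assumes.
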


This theorem shows that, under the assumption of $L$-smoothness, Algorithm~\ref{algo:diffrgd} converges to a stationary point at each timestep with a sublinear rate. In our task-specific application, we observe that the Riemannian gradient norm typically becomes small within three iterations. Therefore, in practice, we fix the number of optimization steps to $K = 3$, and provide the convergence curves of the gradient norm across several timesteps in the Appendix~\ref{appendix:inner_loop}.

\section{Experiment Results}
For a fair comparison, we reimplement several diffusion guidance baselines under identical data and preprocessing, including DPS~\citep{chung2023dps}, FreeDoM~\citep{yu2023freedom}, MPGD~\citep{he2024mpgd}, DSG~\citep{yang2024dsg}, and ADMMDiff~\citep{zhang2025admmdiff}.
Detailed algorithmic descriptions of each method, along with the full set of hyperparameters used in our experiments, are provided in the Appendix~\ref{appendix:algos} and Appendix~\ref{appendix:params}. We follow the official configurations reported in the original papers wherever available. For methods lacking complete parameter specifications in either the paper or public codebase, we report the best-performing hyperparameter settings. All experiments used the DDIM sampling schedule with $\sigma_t = \sqrt{(1-\bar{\alpha}_{t-1}) / (1-\bar{\alpha}_{t})} \cdot \sqrt{1-\bar{\alpha}_t / \bar{\alpha}_{t-1}}$ and were implemented in PyTorch~\citep{paszke2019pytorch} with Hugging Face Diffusers~\citep{platen2022diffusers} and ran on a single NVIDIA GeForce RTX 4090 GPU. In the following sections, we evaluate baseline methods and our proposed approach on image restoration tasks (Section~\ref{sec:image_restoration}) and conditional generation tasks (Section~\ref{sec:condition_gen}).

\subsection{Image Restoration} \label{sec:image_restoration}
We evaluate our proposed method alongside representative baseline approaches on four image restoration tasks. The objective in each task is to reconstruct a clean image $\bm{x}$ from a degraded observation $\bm{y}$. Each task is formulated as a noisy linear inverse problem~\citep{tikhonov1977inverseproblem, chung2023dps} modeled by the forward process:
\begin{equation*}
    \bm{y} = \mathcal{A}(\bm{x}) + \bm{n}, \,
    \bm{n} \sim \mathcal{N}(\bm{0}, 0.05^2 \bm{I})
\end{equation*}
where $\mathcal{A}(\cdot)$ denotes a task-specific linear degradation operator. Given the noisy measurement $\bm{y}$, the goal is to reconstruct $\hat{\bm{x}}_0$ conditioned on a diffusion prior such that $\mathcal{A}(\hat{\bm{x}}_0)$ remains consistent with $\bm{y}$. To enable conditional generation under each inverse setting, we define the guidance objective using an $\ell_2$ reconstruction loss:
\begin{equation*}
    \mathcal{L}_\text{inv}(\hat{\bm{x}}_0, \bm{y}) = \| \mathcal{A}(\hat{\bm{x}}_0) - \bm{y} \|_2.
\end{equation*}

\begin{table}[h]
    \setlength{\tabcolsep}{6pt}
    \centering
    \caption{Quantitative results using DDIM 1,000 sampling steps on 150 samples from the FFHQ 256 $\times$ 256 validation set for image restoration tasks. Metrics are averaged over 100 bootstrap runs. * indicates a statistically significant improvement ($p < 0.05$) over the second-best method.
    }
    \label{tab:ffhq_ddim1000}
    \resizebox{0.95\linewidth}{!}{
    \begin{tabular}{llccccc}
        \toprule
        \textbf{Task} & \textbf{Method} & \textbf{Venue} & \textbf{PSNR $\uparrow$} & \textbf{SSIM $\uparrow$} & \textbf{LPIPS $\downarrow$} & \textbf{FID $\downarrow$} \\
        \midrule
        \multirow{5}{*}{Inpainting}
        & DPS~\citep{chung2023dps} & ICLR 2023 & $30.44$ & $0.863$ & $0.153$ & $42.68$ \\
        & MPGD~\citep{he2024mpgd} & ICLR 2024 & $27.51$ & $0.724$ & $0.256$ & $68.24$ \\
        & DSG~\citep{yang2024dsg} & ICML 2024 & $31.03$ & $0.866$ & $0.144$ & $36.30$ \\
        & ADMMDiff~\citep{zhang2025admmdiff} & CVPR 2025 & $\second{32.38}$ & $\second{0.899}$ & $\second{0.119}$ & $\second{29.52}$ \\
        & \cellcolor{blue!10}DiffRGD (Ours) & \cellcolor{blue!10}ECCV 2026 & \cellcolor{blue!10}$\first{34.04}^{*}$ & \cellcolor{blue!10}$\first{0.926}^{*}$ & \cellcolor{blue!10}$\first{0.096}^{*}$ & \cellcolor{blue!10}$\first{21.88}^{*}$ \\
        \midrule
        \multirow{5}{*}{Super-Resolution $4 \times$}
        & DPS~\citep{chung2023dps} & ICLR 2023 & $26.03$ & $0.727$ & $0.260$ & $80.05$ \\
        & MPGD~\citep{he2024mpgd}  & ICLR 2024 & $24.40$ & $0.614$ & $0.354$ & $101.50$ \\
        & DSG~\citep{yang2024dsg}  & ICML 2024 & $\second{26.71}$ & $\second{0.737}$ & $\second{0.256}$ & $\second{74.67}$ \\
        & ADMMDiff~\citep{zhang2025admmdiff} & CVPR 2025 & $26.48$ & $0.712$ & $0.297$ & $96.69$ \\
        & \cellcolor{blue!10}DiffRGD (Ours) & \cellcolor{blue!10}ECCV 2026 & \cellcolor{blue!10}$\first{27.77}^{*}$ & \cellcolor{blue!10}$\first{0.783}^{*}$ & \cellcolor{blue!10}$\first{0.220}^{*}$ & \cellcolor{blue!10}$\first{63.94}^{*}$ \\
        \midrule
        \multirow{5}{*}{Gaussian Deblurring}
        & DPS~\citep{chung2023dps} & ICLR 2023 & $25.88$ & $0.721$ & $0.237$ & $69.38$ \\
        & MPGD~\citep{he2024mpgd} & ICLR 2024 & $24.07$ & $0.576$ & $0.328$ & $95.12$ \\
        & DSG~\citep{yang2024dsg}  & ICML 2024 & $\first{27.45}$ & $0.751$ & $0.259$ & $\second{75.85}$ \\
        & ADMMDiff~\citep{zhang2025admmdiff} & CVPR 2025 & $26.57$ & $\first{0.757}$ & $\second{0.226}$ & $79.30$ \\
        & \cellcolor{blue!10}DiffRGD (Ours) & \cellcolor{blue!10}ECCV 2026 & \cellcolor{blue!10}$\second{26.80}$ & \cellcolor{blue!10}$\first{0.757}$ & \cellcolor{blue!10}$\first{0.218}^{*}$ & \cellcolor{blue!10}$\first{63.83}^{*}$ \\
        \midrule
        \multirow{5}{*}{Motion Deblurring}
        & DPS~\citep{chung2023dps} & ICLR 2023 & $24.47$ & $0.685$ & $0.271$ & $80.75$ \\
        & MPGD~\citep{he2024mpgd} & ICLR 2024 & $23.15$ & $0.569$ & $0.357$ & $106.99$ \\
        & DSG~\citep{yang2024dsg}  & ICML 2024 & $\second{26.80}$ & $0.709$ & $0.290$ & $87.96$ \\
        & ADMMDiff~\citep{zhang2025admmdiff} & CVPR 2025 & $\first{27.26}$ & $\first{0.778}$ & $\first{0.222}$ & $72.92$ \\
        & \cellcolor{blue!10}DiffRGD (Ours) & \cellcolor{blue!10}ECCV 2026 & \cellcolor{blue!10}$25.84$ & \cellcolor{blue!10}$\second{0.736}$ & \cellcolor{blue!10}$\second{0.250}$ & \cellcolor{blue!10}$\first{72.26}$  \\
        \bottomrule
    \end{tabular}
    }
\end{table}

\begin{figure}[h]
    \centering
    \includegraphics[width=\linewidth]{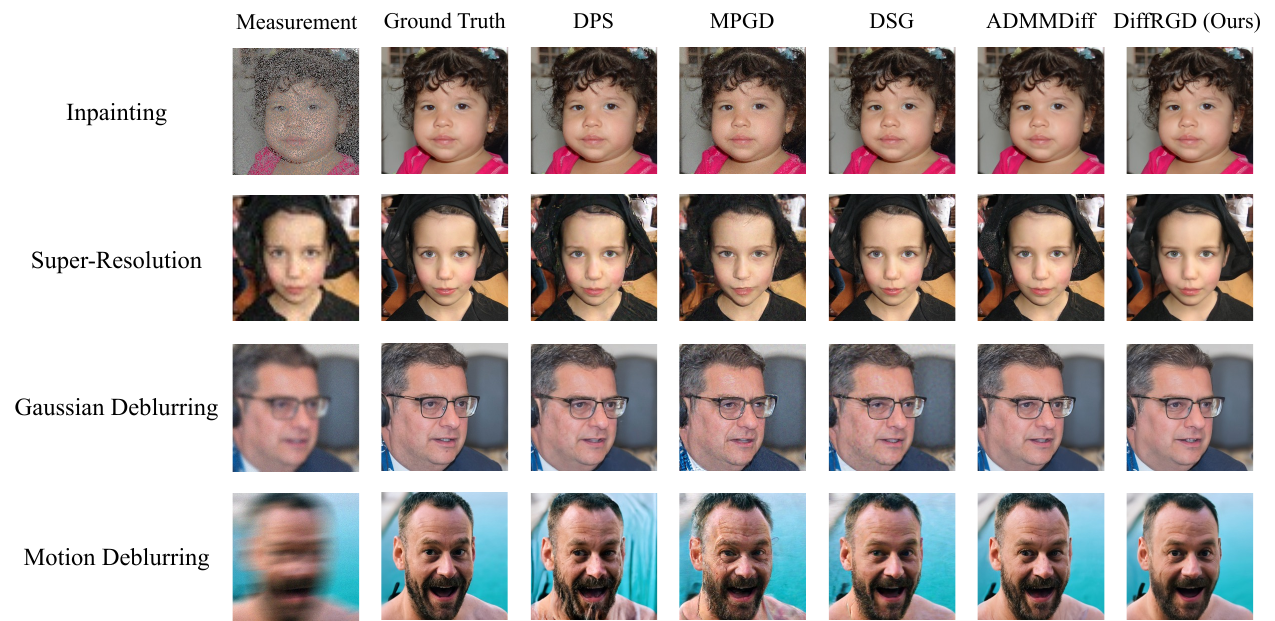}
    \caption{Qualitative comparison for image restoration tasks on FFHQ 256 $\times$ 256
    validation set. Our proposed method achieves better restoration results with fewer artifacts than other compared methods.
    }
    \label{fig:exp_ffhq}
\end{figure}

\clearpage
\begin{table*}[t]
    \setlength{\tabcolsep}{6pt}
    \centering
    \caption{
    Quantitative results using DDIM 100 sampling steps on 150 samples from the FFHQ 256 $\times$ 256 validation set for deblurring tasks. Metrics are averaged over 100 bootstrap runs. * indicates a statistically significant improvement ($p < 0.05$) over the second-best method.
    }
    \label{tab:ffhq_ddim100_1}
    \resizebox{0.9\linewidth}{!}{
    \begin{tabular}{lcccccc}
        \toprule
        \multirow{2.5}{*}{\textbf{Method}} & \multicolumn{3}{c}{\textbf{Gaussian Deblurring}} & \multicolumn{3}{c}{\textbf{Motion Deblurring}} \\
        \cmidrule(lr){2-4} \cmidrule(lr){5-7}
        ~ & \textbf{PSNR $\uparrow$} & \textbf{SSIM $\uparrow$} & \textbf{LPIPS $\downarrow$} & \textbf{PSNR $\uparrow$} & \textbf{SSIM $\uparrow$} & \textbf{LPIPS $\downarrow$} \\
        \midrule
        DPS~\citep{chung2023dps} & $24.28$ & $0.680$ & $0.302$ & $20.63$ & $0.580$ & $0.372$ \\
        MPGD~\citep{he2024mpgd} & $24.60$ & $0.610$ & $0.312$ & $23.13$ & $0.579$ & $0.356$ \\    
        DSG~\citep{yang2024dsg} & $\second{27.02}$ & $\second{0.758}$ & $\second{0.261}$ & $\second{24.81}$ & $\second{0.705}$ & $\first{0.290}$ \\
        ADMMDiff~\citep{zhang2025admmdiff} & $26.01$ & $0.734$ & $0.278$ & $23.16$ & $0.664$ & $0.327$ \\
        \rowcolor{blue!10}
        DiffRGD (Ours) & $\first{27.46}^{*}$ & $\first{0.781}^{*}$ & $\first{0.235}^{*}$ & $\first{24.88}$ & $\first{0.717}$ & $\second{0.294}$ \\
        \bottomrule
    \end{tabular}
    }
\end{table*}

\begin{table*}[h]
    \setlength{\tabcolsep}{6pt}
    \caption{
    Quantitative results using DDIM 100 sampling steps on 150 samples from the ImageNet 256 $\times$ 256 validation set for inpainting and super-resolution. Metrics are averaged over 100 bootstrap runs. * indicates a statistically significant improvement ($p < 0.05$) over the second-best method.
    }
    \centering
    \resizebox{0.9\linewidth}{!}{
    \begin{tabular}{lcccccc}
        \toprule
        \multirow{2.5}{*}{\textbf{Method}} & \multicolumn{3}{c}{\textbf{Inpainting}} & \multicolumn{3}{c}{\textbf{Super-Resolution} $4 \times$} \\
        \cmidrule(lr){2-4} \cmidrule(lr){5-7}
        ~ & \textbf{PSNR $\uparrow$} & \textbf{SSIM $\uparrow$} & \textbf{LPIPS $\downarrow$} & \textbf{PSNR $\uparrow$} & \textbf{SSIM $\uparrow$} & \textbf{LPIPS $\downarrow$} \\
        \midrule
        DPS~\citep{chung2023dps} & $23.24$ & $0.633$ & $0.349$ & $20.26$ & $0.500 $ & $0.449$ \\
        MPGD~\citep{he2024mpgd} & $26.15$ & $0.711$ & $0.323$ & $21.50$ & $0.531$ & $0.478$ \\    
        DSG~\citep{yang2024dsg} & $26.58$ & $\second{0.754}$ & $\second{0.255}$ & $22.82$ & $0.585$ & $\second{0.415}$ \\
        ADMMDiff~\citep{zhang2025admmdiff} & $\second{27.97}$ & $0.744$ & $0.303$ & $\second{24.45}$ & $\second{0.633}$ & $0.430$ \\
        \rowcolor{blue!10}
        DiffRGD (Ours) & $\first{31.16}^{*}$ & $\first{0.875}^{*}$ & $\first{0.208}^{*}$ & $\first{24.95}^{*}$ & $\first{0.663}^{*}$ & $\first{0.379}^{*}$ \\
        \bottomrule
    \end{tabular}
    }
    \label{tab:imagenet_ddim100_1}
    \vspace{-10pt}
\end{table*}

\paragraph{Implementation Details.}
We consider four inverse problem tasks using 150 images from FFHQ 256 $\times$ 256~\citep{karras2019ffhq} validation datasets. We utilize pre-trained diffusion models provided by \citep{chung2023dps, dhariwal2021cg} as the priors for all guidance frameworks. Each inverse problem is defined by a different degradation operator $\mathcal{A}$: (i) \textbf{Inpainting:} A random binary mask is applied to simulate missing pixels, where each pixel has a $70\%$ chance of being masked. (ii) \textbf{Super-Resolution:} Input images are downsampled by a factor of $4$ using bicubic interpolation. (iii) \textbf{Gaussian Deblurring:} A Gaussian blur kernel of size $31 \times 31$ with a standard deviation $3.0$ is applied to simulate blur degradation. (iv) \textbf{Motion Deblurring:} A motion blur kernel of size $61 \times 61$ is applied with an intensity of $0.5$, and the direction of motion is randomly sampled.

\paragraph{Evaluation Protocol.}
We evaluate performance using PSNR~\citep{jain1989dip}, SSIM~\citep{wang2004ssim}, LPIPS~\citep{zhang2018lpips}, and FID~\citep{heusel2017fid} (details are provided in the Appendix~\ref{appendix:metrics}).
To increase the robustness of the metrics and assess statistical significance, we perform bootstrap resampling~\citep{tibshirani1993bootstrap}. For each method, we report the mean performance across all metrics over 100 bootstrap runs, where 100 samples are randomly drawn from the 150-sample evaluation set in each run.

\paragraph{Quantitative Results.}
Table~\ref{tab:ffhq_ddim1000} reports quantitative results using 1,000 DDIM sampling steps, where our method outperforms other baselines on inpainting, super-resolution, and Gaussian deblurring. To assess the robustness of our method with fewer denoising timesteps, we conduct experiments using 100 steps and report in Table~\ref{tab:ffhq_ddim100_1}. As the number of timesteps decreases, DPS and ADMMDiff exhibit noticeable performance degradation, whereas DiffRGD maintains strong performance. We further evaluate inpainting and super-resolution on ImageNet $256 \times 256$~\citep{deng2009imagenet} (Table~\ref{tab:imagenet_ddim100_1}), where DiffRGD continues to outperform the baselines. Additional quantitative results are provided in the Appendix~\ref{appendix:exp}.

\paragraph{Qualitative Results.}
Figure~\ref{fig:exp_ffhq} shows the qualitative results across four image restoration tasks. We can observe that DPS and MPGD introduce some perceptible artifacts in the background across four tasks; DSG performs well on inpainting and super-resolution tasks, but introduces some artifacts in both deblurring tasks. DiffRGD achieves comparable visual quality with ADMMDiff, but with slightly better artifact control and demands less time.

\subsection{Super-Resolution under Various Subsampling Rates}
\label{appendix:super_res}

We further evaluate our method using the Stable Diffusion model~\citep{rombach2022ldm} under various subsampling rates. Specifically, the original FFHQ images at a resolution of $1024 \times 1024$ are downsampled to $64 \times 64$ and then super-resolved to $512 \times 512$ ($8 \times$) and $768 \times 768$ ($12 \times$). As reported in Table~\ref{tab:super_res_ablation}, DiffRGD consistently outperforms both DPS and DSG across both upsampling scales in terms of quantitative metrics. Qualitative results are presented in Figure~\ref{fig:super_res_ablation}, which shows that higher resolutions yield sharper image details. However, the $12\times$ outputs show greater discrepancy in fine-grained structures compared to $8\times$, as reflected in Table 4: while the $12 \times$ results achieve higher SSIM, they exhibit lower PSNR.

\begin{figure*}[h]
    \centering
    \includegraphics[width=0.65\linewidth]{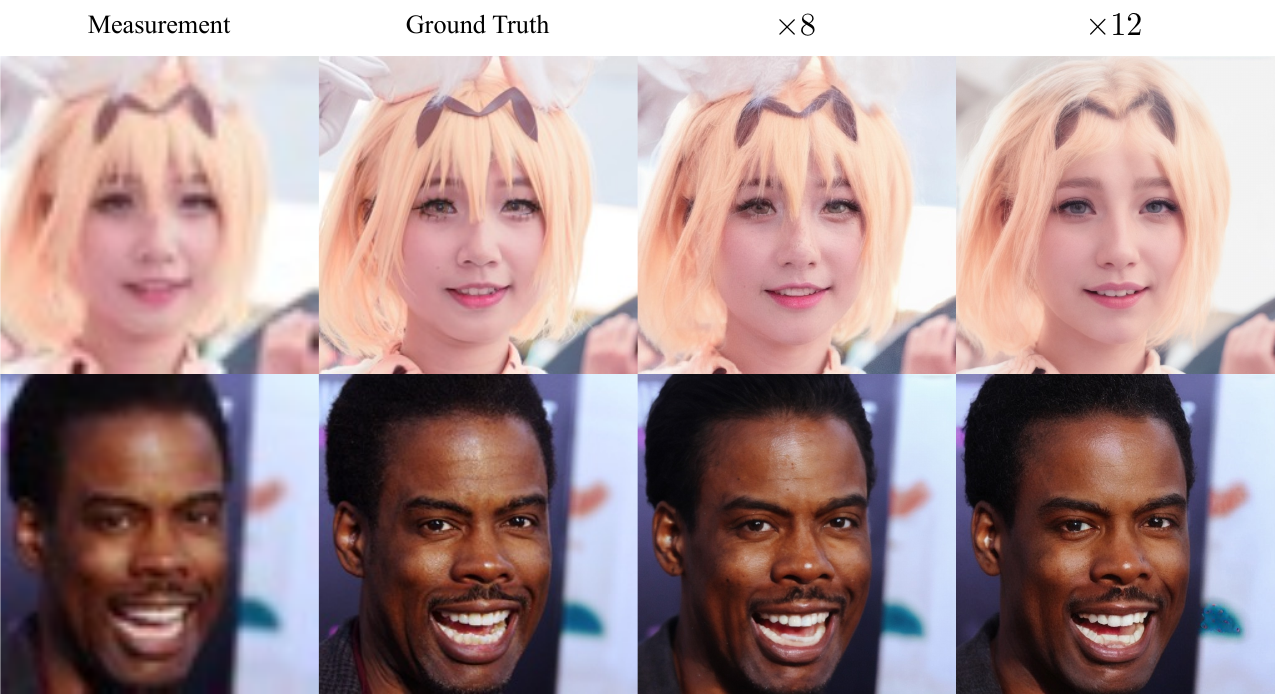}
    \caption{Qualitative results of super-resolution by our DiffRGD method with Stable Diffusion under $\times 8$ and $\times 12$ subsampling rates.}
    \label{fig:super_res_ablation}
    \vspace{-5pt}
\end{figure*}

\begin{table}[h]
    \centering
    \caption{Quantitative results of $8\times$ and $12\times$ super-resolution with Stable Diffusion.}
    \setlength{\tabcolsep}{4pt}
    \begin{tabular}{lcccc}
        \toprule
        \multirow{2.5}{*}{\textbf{Method}} & \multicolumn{2}{c}{$\times 8 \, (64 \rightarrow 512)$} & \multicolumn{2}{c}{$\times 12 \, (64 \rightarrow 768)$} \\
        \cmidrule(lr){2-3} \cmidrule(lr){4-5}
       ~ & \textbf{PSNR $\uparrow$} & \textbf{SSIM $\uparrow$} & \textbf{PSNR $\uparrow$} & \textbf{SSIM $\uparrow$} \\
        \midrule
        DPS~\citep{chung2023dps} & $25.67$ & $0.689$ & $24.99$ & $0.689$ \\
        DSG~\citep{yang2024dsg}  & $25.38$ & $0.679$ & $26.02$ & $\first{0.709}$ \\
        \rowcolor{blue!10}
        DiffRGD (Ours) & $\first{26.41}$ & $\first{0.700}$ & $\first{26.39}$ & $\first{0.709}$ \\
        \bottomrule
    \end{tabular}
    \label{tab:super_res_ablation}
    \vspace{-10pt}
\end{table}

\clearpage
\begin{table*}[t]
    \setlength{\tabcolsep}{2pt}
    \centering
    \caption{
    Quantitative results using DDIM 100 sampling steps on 150 samples from the CelebA-HQ 256 $\times$ 256 validation set for conditional image generation tasks. Metrics are averaged over 100 bootstrap runs. * indicates a statistically significant improvement ($p < 0.05$) over the second-best method.
    }
    \label{tab:condition_gen}
    \resizebox{\linewidth}{!}{
    \begin{tabular}{lccccccccc}
        \toprule
        \multirow{2.5}{*}{\textbf{Method}} & \multicolumn{3}{c}{\textbf{Segmentation Map}} & \multicolumn{3}{c}{\textbf{Sketch}} & \multicolumn{3}{c}{\textbf{FaceID}} \\
        \cmidrule(lr){2-4} \cmidrule(lr){5-7} \cmidrule(lr){8-10}
        & ~ \textbf{mIoU $\uparrow$} & \textbf{FID $\downarrow$} & \textbf{KID $\downarrow$} & \textbf{Sketch-$\ell_2$ $\downarrow$} & \textbf{FID $\downarrow$} & \textbf{KID $\downarrow$} & \textbf{FaceID-$\ell_2$ $\downarrow$} & \textbf{FID $\downarrow$} & \textbf{KID $\downarrow$}  \\
        \midrule
        FreeDoM~\citep{yu2023freedom} & $0.622$ & $156.02$ & $0.057$ & $30.85$ & $101.90$ & $0.017$ & $0.557$ & $127.05$ & $0.032$ \\
        DSG~\citep{yang2024dsg} & $0.750$ & $117.48$ & $\second{0.028}$ & $\second{21.36}$ & $107.00$ & $0.024$ & $\second{0.340}$ & $\second{95.27}$ & $\second{0.014}$ \\
        ADMMDiff~\citep{zhang2025admmdiff} & $\second{0.758}$ & $\second{101.86}$ & $0.031$ & $30.82$ & $\second{97.52}$ & $\second{0.014}$ & $0.346$ & $100.81$ & $\second{0.014}$ \\
        \rowcolor{blue!10}
        DiffRGD (Ours) & $\first{0.804}^{*}$ & $\first{96.10}^{*}$ & $\first{0.026}$ & $\first{19.48}^{*}$ & $\first{87.82}^{*}$ & $\first{0.012}$ & $\first{0.303}^{*}$ & $\first{93.80}$ & $\first{0.011}$ \\
        \bottomrule
    \end{tabular}
    }
\end{table*}

\begin{figure}[t]
    \centering
    \includegraphics[width=\linewidth]{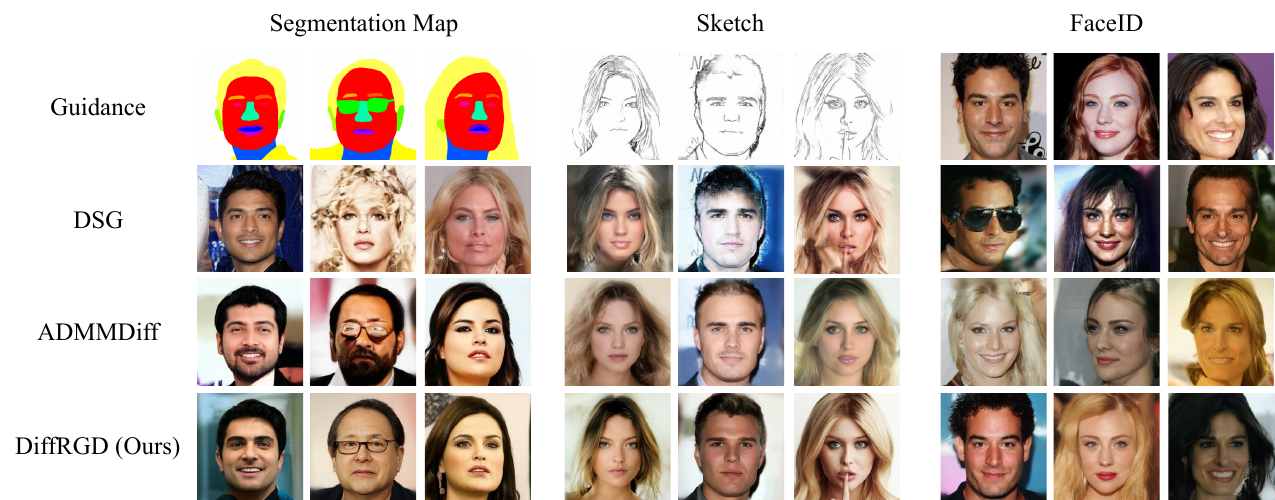}
    \caption{Qualitative results for conditional generation tasks on CelebA-HQ 256 $\times$ 256 validation set. (a) segmentation maps (b) sketches (c) FaceID. Our method generates results that better align with the given conditions than other state-of-the-art methods.}
    \label{fig:exp_condition}
\end{figure}

\subsection{Conditional Generation} \label{sec:condition_gen}
We further evaluate our method against representative
baselines on three conditional generation tasks: segmentation maps, sketches, and FaceID guidance for human face generation. We also provide style-guided generation results in the Appendix~\ref{appendix:style_guide}.

Given a conditional image $\bm{y}$, we extract task-specific features using a pre-trained model $\psi_\theta$. The conditional guidance loss is then defined as:
\begin{equation*}
    \mathcal{L}_\text{cond}(\hat{\bm{x}}_0, \bm{y}) = \| \psi_\theta(\hat{\bm{x}}_0) - \psi_\theta(\bm{y}) \|_2.
\end{equation*}

\paragraph{Implementation Details.}
We adopt pre-trained diffusion models provided by \citep{mengsdedit} as the generative backbone. The three tasks are defined as follows:
(i) \textbf{Segmentation Map Guidance:} We use a pre-trained BiSeNet~\citep{yu2018bisenet} to extract face parsing maps from both generated and reference images. (ii) \textbf{Sketch Guidance:} A pre-trained AODA model~\citep{xiang2022aoda} is used to parse structural cues from input sketches. (iii) \textbf{FaceID Guidance:} We utilize a pre-trained ArcFace model~\citep{deng2019arcface} to extract identity embeddings for evaluating face similarity. 

\paragraph{Evaluation Protocol.}
We evaluate our method on three conditional generation tasks using 150 validation images from the CelebA-HQ $256 \times 256$~\citep{karras2018celeba} dataset. To increase the robustness of the metrics, we perform bootstrap resampling~\citep{tibshirani1993bootstrap}. For each method, we report the mean performance across all metrics over 100 bootstrap runs. Details of the evaluation metrics are provided in the Appendix~\ref{appendix:metrics}.

\paragraph{Quantitative Results.}
Table~\ref{tab:condition_gen} presents the quantitative results, showing that our method consistently outperforms existing state-of-the-art inference-time methods. The generated samples exhibit improved visual fidelity and demonstrate stronger alignments with the given conditional inputs.

\paragraph{Qualitative Results.}
Figure~\ref{fig:exp_condition} presents the qualitative results. DSG shows unstable control and often produces noticeable artifacts. DiffRGD and ADMMDiff generate samples that align well with the guidance, while DiffRGD yields more natural and visually appealing results under the same time constraint.

\subsection{Influence of Guidance Strength}
\label{sec:guidance_strength}

We investigate the influence of guidance strength $\eta_t$ on DSG and DiffRGD. Figure~\ref{fig:exp_different_guidance_strength} shows qualitative results under segmentation map guidance. DiffRGD better aligns with the conditioning signals while preserving natural visual quality across a wide range of guidance strengths. In contrast, DSG shows poor controllability at small guidance strengths and severe degradation at large guidance strengths. Table~\ref{tab:ablation_guidance_rate} reports quantitative results on 100 CelebA-HQ $256 \times 256$ samples, further showing that DiffRGD achieves a better mIoU--FID trade-off than DSG and remains stable across guidance strengths.

\begin{figure}[h]
    \centering
    \includegraphics[width=0.7\linewidth]{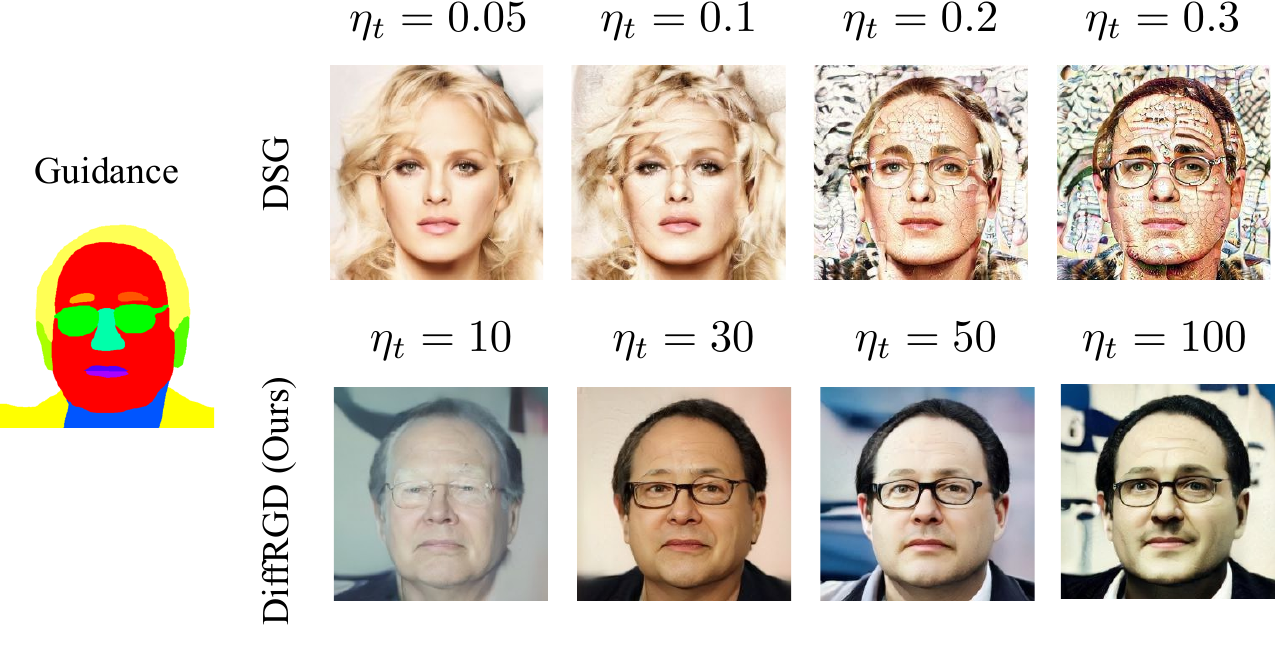}
    \caption{Comparison of different guidance strengths for segmentation map-guided conditional generation.}
    \label{fig:exp_different_guidance_strength}
    \vspace{-10pt}
\end{figure}

\begin{table}[h]
    \centering
    \caption{Influence of guidance strength $\eta_t$ on segmentation map conditional generation with DSG and DiffRGD.}
    \setlength{\tabcolsep}{4pt}
    \begin{tabular}{lcccccccc}
        \toprule
        ~ & \multicolumn{4}{c}{DSG} & \multicolumn{4}{c}{DiffRGD} \\
        \cmidrule(lr){2-5} \cmidrule(lr){6-9}
        $\eta_t$ & 0.05 & 0.1 & 0.2 & 0.3 & 10 & 30 & 50 & 100 \\
        \midrule
        mIoU $\uparrow$ & $0.53$ & $0.76$ & $0.90$ & $0.93$ & $0.74$ & $0.82$ & $0.84$ & $0.86$ \\
        FID $\downarrow$ & $95.69$ & $110.00$ & $157.82$ & $248.77$ & $80.73$ & $89.03$ & $93.47$ & $108.33$ \\
        \bottomrule
    \end{tabular}
    \label{tab:ablation_guidance_rate}
\end{table}

\clearpage

\section{Conclusion}
To address the sample quality degradation arising from the failure to respect the stepwise Gaussian latent structure of diffusion models, we propose a novel inference-time guidance method, \emph{DiffRGD}. Our method constrains updates to a latent-distribution-induced spherical manifold and solves each step via Riemannian Gradient Descent, maintaining distributional fidelity while enabling effective conditioning. On image restoration and conditional generation tasks, DiffRGD matched or surpassed prior inference-time methods, improving task metrics while preserving sample quality. These results indicate that geometry-consistent guidance is a viable alternative to directly applying the gradient in the ambient space or in a restrictive proposed manifold. In sum, respecting diffusion’s latent distribution yields a simple, plug-and-play guidance with strong practical payoff.

\section*{Acknowledgements}
This research is supported by National Science and Technology Council, Taiwan (R.O.C.), under the grant number of NSTC-113-2115-M-003-012-MY2, NSTC-113-2221-E-002-201, NSTC-113-2221-E-007-105-MY3, NSTC-114-2221-E-001-004, NSTC-114-2221-E-002-182-MY3, NSTC-114-2634-F-001-001-MBK, NSTC-114-2634-F-002-004, and Academia Sinica under the grant number of AS-IAIA-114-M10. We sincerely thank Kang-Yang Huang for the discussions and valuable feedback during the rebuttal stage. We also thank Yu-Hsuan Lin and Hsien-Yueh Huang for their continuous encouragement throughout this project.

\clearpage
{
    \bibliographystyle{iclr2026_conference}
    \bibliography{ref}
}

\clearpage
\appendix
\renewcommand{\thesection}{\Alph{section}}
\renewcommand{\theHsection}{\Alph{section}}
\setcounter{section}{0}
\begin{center}
    \LARGE{\textbf{Appendix}}
\end{center}
\begin{itemize}[leftmargin=0pt]
    \item[] \hyperref[appendix:notation]{\ref*{appendix:notation}.~\nameref*{appendix:notation}} \dotfill \pageref{appendix:notation}

    \item[] \hyperref[appendix:math]{\ref*{appendix:math}.~\nameref*{appendix:math}} \dotfill \pageref{appendix:math}

    \item[] \hyperref[appendix:implement]{\ref*{appendix:implement}.~\nameref*{appendix:implement}} \dotfill \pageref{appendix:implement}

    \begin{itemize}
        \item[] \hyperref[appendix:algos]{\ref*{appendix:algos}.~\nameref*{appendix:algos}} \dotfill \pageref{appendix:algos}
        \item[] \hyperref[appendix:params]{\ref*{appendix:params}.~\nameref*{appendix:params}} \dotfill \pageref{appendix:params}
        \item[] \hyperref[appendix:metrics]{\ref*{appendix:metrics}.~\nameref{appendix:metrics}} \dotfill \pageref{appendix:metrics}
    \end{itemize}

    \item[] \hyperref[appendix:analysis]{\ref*{appendix:analysis}.~\nameref*{appendix:analysis}} \dotfill \pageref{appendix:analysis}
    \begin{itemize}
        \item[] \hyperref[appendix:distribution]{\ref*{appendix:distribution}.~\nameref*{appendix:distribution}} \dotfill \pageref{appendix:distribution}
        \item[] \hyperref[appendix:inner_loop]{\ref*{appendix:inner_loop}.~\nameref*{appendix:inner_loop}} \dotfill \pageref{appendix:inner_loop}
    \end{itemize}

    \item[] \hyperref[appendix:exp]{\ref*{appendix:exp}.~\nameref*{appendix:exp}} \dotfill \pageref{appendix:exp}
    \begin{itemize}
        \item[] \hyperref[appendix:image_denoise]{\ref*{appendix:image_denoise}.~\nameref*{appendix:image_denoise}} \dotfill \pageref{appendix:image_denoise}
        \item[] \hyperref[appendix:style_guide]{\ref*{appendix:style_guide}.~\nameref{appendix:style_guide}} \dotfill \pageref{appendix:style_guide}
    \end{itemize}

    \item[] \hyperref[appendix:computation_time]{\ref*{appendix:computation_time}.~\nameref{appendix:computation_time}} \dotfill \pageref{appendix:computation_time}

    \item[] \hyperref[appendix:code]{\ref*{appendix:code}.~\nameref*{appendix:code}} \dotfill \pageref{appendix:code}
    \item[] \hyperref[appendix:discussion]{\ref*{appendix:discussion}.~\nameref*{appendix:discussion}} \dotfill \pageref{appendix:discussion}

\end{itemize}

\newpage
\section{Notation}
\label{appendix:notation}
We provide a summary of the notations and symbols adopted in this work, as listed in the following.
\begin{table}[H]
    \centering
    \footnotesize{
    \begin{tabular*}{0.9\linewidth}{@{\extracolsep{\fill}} ll}
        \midrule
        \multicolumn{2}{l}{\textit{General Mathematics}} \\
        \midrule
        $\bm{I}_n$ & $n$-dimensional identity matrix \\
        $\mathbb{S}^{n-1}$ & $(n\!-\!1)$-dimensional hypersphere in $\mathbb{R}^n$ \\
        $\mathcal{N}(\bm{\mu}, \bm{\Sigma})$ & Gaussian distribution with mean $\bm{\mu}$ and covariance $\bm{\Sigma}$ \\
        $\operatorname{Unif}(A)$ & Uniform distribution on set $A$ \\
        $\mathcal{L}$ & Loss function \\
        $\nabla_{\bm{x}} \mathcal{L}$ & Euclidean gradient of $\mathcal{L}$ at $\bm{x}$ \\
        \midrule
        \multicolumn{2}{l}{\textit{Diffusion Process}} \\
        \midrule
        $\bm{x}_t$ & Noisy latent variable at diffusion timestep $t$ \\
        $\bar{\alpha}_t$ & DDIM noise schedule parameter \\
        $\bm{\mu}_t$ & Predicted mean of DDIM at timestep $t$ \\
        $\sigma_t$ & Noise standard deviation at timestep $t$ \\
        $\bm{\epsilon}_\theta(\bm{x}_t, t)$ & Output of diffusion model (e.g., UNet) \\
        $\hat{\bm{x}}_0(\bm{x}_t, t)$ & Clean image predicted by Tweedie’s formula \\
        \midrule
        \multicolumn{2}{l}{\textit{Diffusion Guidance}} \\
        \midrule
        $\bm{y}$ &  Conditional input or measurement \\
        $\mathcal{A}$ & Task-specific linear degradation operator \\
        $\eta_t$ & Guidance strength at timestep $t$ \\
        \midrule
        \multicolumn{2}{l}{\textit{Riemannian Geometry}} \\
        \midrule
        $\mathcal{S}_{t, r}$ & Spherical manifold at timestep $t$ \\
        $\operatorname{T}_{\bm{x}} \mathcal{M}$ & Tangent space of manifold $\mathcal{M}$ at point $\bm{x}$ \\
        $\Pi_{\operatorname{T}_{\bm{x}} \mathcal{M}}$ & Projection operator onto $\operatorname{T}_{\bm{x}} \mathcal{M}$ \\
        $\operatorname{R}_{\bm{x}}(\bm{v})$ & Retraction operator at point $\bm{x}$ along vector $\bm{v}$ \\
        $\operatorname{grad}_{\mathcal{M}} \bm{f}$ & Riemannian gradient of $\bm{f}$ on manifold $\mathcal{M}$ \\
        \bottomrule
    \end{tabular*}
    }
\end{table}

\section{Mathematical Details}
\label{appendix:math}

\setcounter{proposition}{0}
\setcounter{theorem}{0}

{
\renewcommand{\theproposition}{1}

}

\begin{proof}
We can write $\bm{x}_{t-1} = \bm{\mu}_t + \sigma_t \bm{z}$, where $\bm{z} \sim \mathcal{N}(\bm{0}, \bm{I})$ is a standard multivariate Gaussian. Our goal is to show that $\bm{x}_{t-1}$ admits a decomposition as
\begin{equation*}
\bm{x}_{t-1} = \bm{\mu}_t + \sigma_t r \bm{u},
\end{equation*}
where $r \sim \chi(n)$ and $\bm{u} \sim \operatorname{Unif}(\mathbb{S}^{n-1})$ are independent.

Let the radius
\begin{equation*}
r = \|\bm{z}\|_2 = \sqrt{z_1^2 + \cdots + z_n^2},
\end{equation*}
and the unit vector
\begin{equation*}
\bm{u} = \frac{\bm{z}}{\|\bm{z}\|_2} \in \mathbb{S}^{n-1}.
\end{equation*}
Then $\bm{z}$ can be written as $\bm{z} = r \bm{u}$ in spherical coordinates. The joint density of $\bm{z}$ is given by
\begin{equation*}
p_{\bm{z}}(\bm{z}) = (2\pi)^{-n/2} \exp\left(-\frac{1}{2} \|\bm{z}\|_2^2\right).
\end{equation*}

Moreover, in spherical coordinates, the volume element transforms as
\begin{equation*}
\dd \bm{z} = r^{n-1} \dd r \, \dd \bm{u},
\end{equation*}
where $\dd \bm{u}$ denotes the uniform surface measure on the unit sphere $\mathbb{S}^{n-1}$. Therefore, the joint density in $(r, \bm{u})$ coordinates becomes
\begin{equation*}
p_{r, \bm{u}}(r, \bm{u}) = (2\pi)^{-n/2} \exp\left(-\frac{1}{2} r^2\right) r^{n-1}.
\end{equation*}
Marginalizing over $\bm{u}$ gives the marginal density of $r$:
\begin{align*}
p_r(r) &= \int_{\bm{u} \in \mathbb{S}^{n-1}} p_{r, \bm{u}}(r, \bm{u}) \, \dd \bm{u} = p_{r, \bm{u}}(r, \bm{u}) \cdot |\mathbb{S}^{n-1}| \\
&= \frac{1}{2^{n/2 - 1} \mathrm{\Gamma}(n/2)} r^{n-1} \exp\left(-\frac{1}{2} r^2\right),
\end{align*}
which is the probability density function of the $\chi(n)$ distribution. Moreover, the conditional density of $\bm{u}$ given $r$ is uniform over the sphere:
\begin{equation*}
p_{\bm{u}|r}(\bm{u} | r) = \frac{p_{r,\bm{u}}(r, \bm{u})}{p_r(r)} = \frac{1}{|\mathbb{S}^{n-1}|} = p_{\bm{u}}(\bm{u}).
\end{equation*}
Therefore, $\bm{x}_{t-1}$ admits the decomposition
\begin{equation*}
\bm{x}_{t-1} = \bm{\mu}_t + \sigma_t r \bm{u},
\end{equation*}
with $r \sim \chi(n)$, $\bm{u} \sim \operatorname{Unif}(\mathbb{S}^{n-1})$, and $r$ and $\bm{u}$ are independent.
\end{proof}

\begin{proposition}
\label{prop:chi_mean_var}
Let $r \sim \chi(n)$ be a chi-distributed random variable with $n$ degrees of freedom. Then, as $n \to \infty$, the expectation and variance of $r$ admit the following asymptotic expansions:
\begin{align*}
    \mathbb{E}[r] &= \sqrt{n - \tfrac{1}{2}} + \mathcal{O}(n^{-3/2}), \\
    \operatorname{Var}[r] &= \tfrac{1}{2} + \mathcal{O}(n^{-1}).
\end{align*}
\end{proposition}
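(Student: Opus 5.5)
The plan is to compute the exact moments of $r \sim \chi(n)$ and then expand them asymptotically. From the density in the proof of Proposition~\ref{props:polar_decom}, a direct substitution $s = r^2/2$ gives
\begin{equation*}
\mathbb{E}[r] = \sqrt{2}\,\frac{\Gamma\!\left(\frac{n+1}{2}\right)}{\Gamma\!\left(\frac{n}{2}\right)}, \qquad \mathbb{E}[r^2] = n.
\end{equation*}
The second identity is immediate, since $r^2 = \|\bm{z}\|_2^2$ is a sum of $n$ squared standard normals. It follows that
\begin{equation*}
\operatorname{Var}[r] = n - \left(\mathbb{E}[r]\right)^2,
\end{equation*}
so both claims reduce to one sufficiently precise expansion of the Gamma ratio.

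For that ratio, set $x = n/2$ and use the classical expansion
\begin{equation*}
\frac{\Gamma(x+\tfrac12)}{\Gamma(x)} = \sqrt{x}\left(1 - \frac{1}{8x} + \frac{1}{128x^2} + \mathcal{O}(x^{-3})\right),
\end{equation*}
which I would derive from Stirling's series for $\log\Gamma$. Concretely, take the difference
\begin{equation*}
\log\Gamma(x+\tfrac12) - \log\Gamma(x) = \tfrac12\log x - \frac{1}{8x} + \mathcal{O}(x^{-3}),
\end{equation*}
whose $x^{-2}$ term cancels, and then exponentiate.

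Multiplying by $\sqrt2$ gives
\begin{equation*}
\mathbb{E}[r] = \sqrt{n}\left(1 - \frac{1}{4n} + \frac{1}{32n^2} + \mathcal{O}(n^{-3})\right).
\end{equation*}
Separately, the binomial expansion gives
\begin{equation*}
\sqrt{n-\tfrac12} = \sqrt{n}\left(1 - \frac{1}{4n} - \frac{1}{32n^2} + \mathcal{O}(n^{-3})\right).
\end{equation*}
The $n^{-2}$ coefficients differ in sign, so the two expressions differ by $\tfrac{1}{16}n^{-3/2} + \mathcal{O}(n^{-5/2})$. This is exactly $\mathcal{O}(n^{-3/2})$, which proves the first claim.

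For the variance, I would square the expansion of $\mathbb{E}[r]$:
\begin{equation*}
\left(\mathbb{E}[r]\right)^2 = n\left(1 - \frac{1}{2n} + \mathcal{O}(n^{-2})\right) = n - \tfrac12 + \mathcal{O}(n^{-1}),
\end{equation*}
and hence $\operatorname{Var}[r] = \tfrac12 + \mathcal{O}(n^{-1})$. Alternatively, I could square the first claim directly, since $\bigl(\sqrt{n-1/2} + \mathcal{O}(n^{-3/2})\bigr)^2 = n - \tfrac12 + \mathcal{O}(n^{-1})$.

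The main obstacle is getting the Gamma-ratio expansion to second order with correct coefficients and a rigorous remainder. I would first try the Stirling-series difference described above, expanding $(x+\tfrac12)\log(1+\tfrac{1}{2x})$ carefully. If a fully rigorous bound is needed, I would fall back on Wendel's inequality or Gautschi-type bounds to control the error term.
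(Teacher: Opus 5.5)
Your proposal is correct and follows essentially the same route as the paper: the exact formula $\mathbb{E}[r]=\sqrt{2}\,\Gamma(\tfrac{n+1}{2})/\Gamma(\tfrac{n}{2})$, a Stirling-based expansion of the Gamma ratio, comparison with $\sqrt{n-\tfrac12}$, and $\operatorname{Var}[r]=n-(\mathbb{E}[r])^2$. The differences are minor: the paper expands the ratio via the Legendre duplication formula and stops at $\sqrt{z}\,(1-\tfrac{1}{8z}+\mathcal{O}(z^{-2}))$, which already suffices because $\sqrt{n}\cdot\mathcal{O}(n^{-2})=\mathcal{O}(n^{-3/2})$; your second-order log-Gamma expansion is more than is needed, but it correctly shows the discrepancy is $\tfrac{1}{16}n^{-3/2}+\mathcal{O}(n^{-5/2})$, so the stated error order is sharp.
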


\begin{proof}

The closed-form expression for the expectation of \( r \) is given by
\begin{equation*}
\mathbb{E}[r] = \sqrt{2} \cdot \frac{\mathrm{\Gamma}\left( \frac{n+1}{2} \right)}{\mathrm{\Gamma}\left( \frac{n}{2} \right)}.
\end{equation*}

To derive its asymptotic expansion, we apply the Legendre duplication formula:
\begin{equation*}
\frac{\mathrm{\Gamma}(z + \tfrac{1}{2})}{\mathrm{\Gamma}(z)} = 2^{1 - 2z} \sqrt{\pi} \cdot \frac{\mathrm{\Gamma}(2z)}{\mathrm{\Gamma}(z)^2}.
\end{equation*}
We next apply Stirling’s approximation to second order:
\begin{equation*}
\mathrm{\Gamma}(z) = \sqrt{\frac{2\pi}{z}} \left( \frac{z}{e} \right)^z \left( 1 + \frac{1}{12z} + \mathcal{O}(z^{-2}) \right).
\end{equation*}
Combining the above, we obtain
\begin{align*}
\frac{\mathrm{\Gamma}(z + \tfrac{1}{2})}{\mathrm{\Gamma}(z)}
&= \sqrt{z} \cdot \frac{1 + \frac{1}{24z} + \mathcal{O}(z^{-2})}{1 + \frac{1}{6z} + \mathcal{O}(z^{-2})} \\
&= \sqrt{z} \left( 1 - \frac{1}{8z} + \mathcal{O}(z^{-2}) \right).
\end{align*}
Substituting $z = n / 2$ yields
\begin{align*}
\mathbb{E}[r]
&= \sqrt{n} \left( 1 - \frac{1}{4n} + \mathcal{O}(n^{-2}) \right) \\
&= \sqrt{n - \tfrac{1}{2}} + \mathcal{O}(n^{-3/2}),
\end{align*}
where the final equality follows from the Taylor expansion of $\sqrt{n - \tfrac{1}{2}}$.

The variance of $r$ is computed via the identity:
\begin{align*}
\operatorname{Var}[r] = n - \mathbb{E}[r]^2.
\end{align*}
Using the expansion of $\mathbb{E}[r]$, we compute
\begin{align*}
\mathbb{E}[r]^2
&= \left[ \sqrt{n} \left( 1 - \frac{1}{4n} + \mathcal{O}(n^{-2}) \right) \right]^2 \\
&= n \left( 1 - \frac{1}{2n} + \mathcal{O}(n^{-2}) \right) \\
&= n - \tfrac{1}{2} + \mathcal{O}(n^{-1}).
\end{align*}
Therefore, the variance becomes
\begin{equation*}
\operatorname{Var}[r] = n - \left( n - \tfrac{1}{2} + \mathcal{O}(n^{-1}) \right) = \tfrac{1}{2} + \mathcal{O}(n^{-1}).
\end{equation*}

\end{proof}

\paragraph{Remark.}
$\sigma_t r$ will concentrate around $\sqrt{n} \sigma_t$ and the variance converges to $\sigma_t^2 / 2 \neq 0$ as $n \to \infty$. In contrast, DSG~\cite{yang2024dsg} fixed their spherical constraint with $\sqrt{n} \sigma_t$. Therefore, our formulation is more flexible.

\begin{definition}[Stationary Point on Riemannian Manifold]
Let $\bm{f}: \mathcal{M} \to \mathbb{R}$ be a differentiable function on a Riemannian manifold $\mathcal{M}$. A point $\bm{x}^* \in \mathcal{M}$ is a \emph{stationary point} if the Riemannian gradient vanishes, i.e.,
\begin{equation*}
    \operatorname{grad}_\mathcal{M} \bm{f}(\bm{x}^*) = \bm{0}.
\end{equation*}
\end{definition}

{
\renewcommand{\thetheorem}{1}

}

\begin{proof}
Since $\bm{f}$ is $L_t$-smooth, the standard descent lemma for Riemannian Gradient Descent~\citep{absil2008rgd} implies:
\begin{equation*}
    \bm{f}_t(\bm{x}^{(k+1)}) \leq \bm{f}_t(\bm{x}^{(k)}) - \eta_t^{(k)} (1 - \tfrac{L_t \eta_t^{(k)}}{2}) \| \operatorname{grad}_{\mathcal{S}_{t, r}} \bm{f}_t(\bm{x}^{(k)}) \|_2^2.
\end{equation*}
Because $\eta_t^{(k)} \leq \frac{1}{L}_t$, we have $1 - \frac{L_t \eta_t^{(k)}}{2} \geq \frac{1}{2}$. Hence, for all $K \in \mathbb{N}$ we have
\begin{equation*}
    \bm{f}_t(\bm{x}^{(0)}) - \inf_{\bm{x} \in \mathcal{S}_{t, r}} \bm{f}_t(\bm{x}) \geq \frac{1}{2} \sum_{k=0}^{K-1} \eta_t^{(k)} \| \operatorname{grad}_{\mathcal{S}_{t, r}} \bm{f}_t(\bm{x}^{(k)}) \|_2^2.
\end{equation*}
Since $\eta_t^{(k)} \geq \eta_\mathrm{min} > 0$, dividing both sides by $K$, we obtain
\begin{equation*}
    \frac{1}{K} \sum_{k=0}^{K-1} \| \operatorname{grad}_{\mathcal{S}_{t, r}} \bm{f}_t(\bm{x}^{(k)}) \|_2^2 \leq \frac{2\left( \bm{f}_t(\bm{x}^{(0)}) - \displaystyle \inf_{\bm{x} \in \mathcal{S}_{t, r}} \bm{f}_t(\bm{x}) \right)}{\eta_\mathrm{min} K}.
\end{equation*}
Therefore, there exists integer $k^* \in [0, K]$ such that
\begin{equation*}
    \| \operatorname{grad}_{\mathcal{S}_{t, r}} \bm{f}_t(\bm{x}^{(k^*)}) \|_2 \leq \sqrt{ \frac{2\left( \bm{f}_t(\bm{x}^{(0)}) - \inf_{\bm{x} \in \mathcal{S}_{t, r}} \bm{f}_t(\bm{x}) \right)}{\eta_\mathrm{min} K} }.
\end{equation*}
This implies that $\| \operatorname{grad}_{\mathcal{S}_{t, r}} \bm{f}_t(\bm{x}^{(k)}) \|_2 \to 0$ as $k \to \infty$ and the convergence rate is $O(\frac{1}{\sqrt{k}})$.
\end{proof}

\section{Implementation Details}
\label{appendix:implement}

\subsection{Algorithms of Compared Methods}
\label{appendix:algos}

We provide the algorithms of previous methods used in our comparisons: DPS~\citep{chung2023dps} in Algorithm~\ref{algo:dps}, MPGD~\citep{he2024mpgd} in Algorithm~\ref{algo:mpgd}, DSG~\citep{yang2024dsg} in Algorithm~\ref{algo:dsg}, and ADMMDiff~\citep{zhang2025admmdiff} in Algorithm~\ref{algo:admmdiff}. We also correct several typos in the ADMMDiff algorithm from the original paper.

\setcounter{algorithm}{1}
\begin{algorithm}[H]
    \caption{DPS~\citep{chung2023dps}}~\label{algo:dps}
    \begin{algorithmic}[1] 
        \Require{
        Reference image $\bm{y} \in \mathbb{R}^n$,
        Diffusion model $\bm{\epsilon}_\theta$,
        loss function $\mathcal{L}$,
        DDIM sampling schedule parameters $\{\bar{\alpha}_t\}_{t=1}^T$,
        noise level $\sigma_t > 0$,
        guidance strengths $\{\eta_t\}_{t=1}^T$.
        }
        \State{$\bm{x}_T \sim \mathcal{N}(\bm{0}, \bm{I}_n)$.}
        \For{$t=T$ to $1$}
            \State{$\bm{\epsilon}_t \sim \mathcal{N}(\bm{0}, \bm{I}_n)$.}
            \State{$\bm{\mu}_{t} \leftarrow \sqrt{\bar{\alpha}_{t - 1}}
            \hat{\bm{x}}_0(\bm{x}_t, t) + \sqrt{1-\bar{\alpha}_{t-1}-\sigma_t^2}\bm{\epsilon}_{\theta}(\bm{x}_t, t)$.}
            \State{$\hat{\bm{x}}_{t-1} \leftarrow \bm{\mu}_{t} + \sigma_t \bm{\epsilon}_t$.}
            \State{$\bm{x}_{t-1} \leftarrow \hat{\bm{x}}_{t-1} - \eta_t \nabla_{\bm{x}_{t}} \mathcal{L}(\hat{\bm{x}}_{0}(\bm{x}_{t}, t), \bm{y})$.}
        \EndFor
        \Ensure{Guided sample $\bm{x}_0$.}
    \end{algorithmic}
\end{algorithm}

\begin{algorithm}[H]
    \caption{MPGD~\citep{he2024mpgd}}~\label{algo:mpgd}
    \begin{algorithmic}[1] 
        \Require{
        Reference image $\bm{y} \in \mathbb{R}^n$,
        Diffusion model $\bm{\epsilon}_\theta$,
        projector $\mathcal{P}_\mathcal{M}$,
        loss function $\mathcal{L}$,
        DDIM sampling schedule parameters $\{\bar{\alpha}_t\}_{t=1}^T$,
        noise level $\sigma_t > 0$,
        guidance strengths $\{\eta_t\}_{t=1}^T$.
        }
        \State{$\bm{x}_T \sim \mathcal{N}(\bm{0}, \bm{I}_n)$.}
        \For{$t=T$ to $1$}
            \State{$\bm{\epsilon}_t \sim \mathcal{N}(\bm{0}, \bm{I}_n)$.}
            \State{$\bm{x}_{0 \mid t} \leftarrow \hat{\bm{x}}_0(\bm{x}_t, t)$.}
            \State{$\hat{\bm{x}}_{0 \mid t} \leftarrow \bm{x}_{0 \mid t} - \eta_t \mathcal{P}_\mathcal{M}(\nabla_{\bm{x}_{0 \mid t}} \mathcal{L}(\bm{x}_{0 \mid t}, \bm{y}))$.
            \Comment{PGD}
            }
            \State{$\hat{\bm{\epsilon}}_t \leftarrow \frac{\bm{x}_t - \sqrt{\bar{\alpha}_t} \hat{\bm{x}}_{0|t}}{\sqrt{1-\bar{\alpha}_t}}$.}
            \State{$\bm{\mu}_{t} \leftarrow \sqrt{\bar{\alpha}_{t - 1}}
            \hat{\bm{x}}_{0 \mid t} + \sqrt{1-\bar{\alpha}_{t-1}-\sigma_t^2}\hat{\bm{\epsilon}}_t$.}
            \State{$\bm{x}_{t-1} \leftarrow \bm{\mu}_{t} + \sigma_t \bm{\epsilon}_t$.}
        \EndFor
        \Ensure{Guided sample $\bm{x}_0$.}
    \end{algorithmic}
\end{algorithm}

\begin{algorithm}[H]
    \caption{DSG~\citep{yang2024dsg}}~\label{algo:dsg}
    \begin{algorithmic}[1] 
        \Require{
        Reference image $\bm{y} \in \mathbb{R}^n$,
        Diffusion model $\bm{\epsilon}_\theta$,
        loss function $\mathcal{L}$,
        DDIM sampling schedule parameters $\{\bar{\alpha}_t\}_{t=1}^T$,
        noise level $\sigma_t > 0$,
        guidance strengths $\{\eta_t\}_{t=1}^T$.
        }
        \State{$\bm{x}_T \sim \mathcal{N}(\bm{0}, \bm{I}_n)$.}
        \For{$t=T$ to $1$}
            \State{$\bm{\epsilon}_t \sim \mathcal{N}(\bm{0}, \bm{I}_n)$.}
            \State{$\bm{u} \leftarrow \sigma_t \bm{\epsilon}_t$.}
            \State{$\bm{g} \leftarrow \nabla_{\bm{x}_{t}} \mathcal{L}(\hat{\bm{x}}_{0}(\bm{x}_t, t), \bm{y}) \, / \, \|\nabla_{\bm{x}_{t}} \mathcal{L}(\hat{\bm{x}}_{0}(\bm{x}_t, t), \bm{y})\|_2$.}
            \State{$\bm{\mu}_t \leftarrow \sqrt{\bar{\alpha}_{t-1}} \hat{\bm{x}}_{0}(\bm{x}_t, t)+ \sqrt{1 - \bar{\alpha}_{t-1} - \sigma_t^2} \, \bm{\epsilon}_\theta(\bm{x}_t, t)$.}
            \State{
            $\bm{d} \leftarrow (1-\eta_t)\bm{u} + \eta_t \bm{g}$.
            \hfill $\triangleright$ Interpolation
            }
            \State{$\bm{x}_{t-1} \leftarrow \bm{\mu}_{t} + \sqrt{n} \sigma_t \frac{\bm{d}}{\|\bm{d}\|_2}$.
            \hfill $\triangleright$ Normalization
            }
        \EndFor
        \Ensure{Guided sample $\bm{x}_0$.}
    \end{algorithmic}
\end{algorithm}

\begin{algorithm}[H]
    \caption{ADMMDiff~\citep{zhang2025admmdiff}}~\label{algo:admmdiff}
    \begin{algorithmic}[1] 
        \Require{
        Reference image $\bm{y} \in \mathbb{R}^n$,
        Diffusion model $\bm{\epsilon}_\theta$,
        loss function $\mathcal{L}$,
        DDIM sampling schedule parameters $\{\bar{\alpha}_t\}_{t=1}^T$,
        noise level $\sigma_t > 0$,
        guidance strengths $\{\eta_t^{(k)}\}_{t=1}^T$,
        guidance inner iteration $K \in \mathbb{N}$,
        penalty parameter $\rho > 0$.
        }
        \State{$\bm{x}_T \sim \mathcal{N}(\bm{0}, \bm{I}_n)$.}
        \State{$\bm{z}_T \leftarrow \bm{x}_T$.}
        \State{$\bm{v}_T \leftarrow \bm{0}$.}
        \For{$t=T$ to $1$}
            \State{$\bm{\epsilon}_t \sim \mathcal{N}(\bm{0}, \bm{I}_n)$.}
            \State{$\hat{\bm{x}}_{t} \leftarrow \bm{z}_t -\frac{1}{\rho}\bm{v}_t$.
            \Comment{$\bm{x}_t$-subproblem}
            }
            \State{$\bm{\mu}_{t} \leftarrow \sqrt{\bar{\alpha}_{t - 1}}
            \hat{\bm{x}}_0(\hat{\bm{x}}_{t}, t) + \sqrt{1-\bar{\alpha}_{t-1}-\sigma_t^2}\bm{\epsilon}_{\theta}(\hat{\bm{x}}_{t}, t)$.}
            \State{$\bm{x}_{t-1} \leftarrow \bm{\mu}_{t} + \sigma_t \bm{\epsilon}_t$.}
            \State{$\bm{z}_t^{(0)} \leftarrow \bm{z}_t$.
            \Comment{$\bm{z}_t$-subproblem}
            }
            \For{$k=0$ to $K-1$}
                \State{$\bm{g} \leftarrow \nabla_{\bm{z}_t^{(k)}} \mathcal{L}(\hat{\bm{x}}_{0}(\bm{z}_t^{(k)}), \bm{y})$.}
                \State{$\bm{z}_t^{(k+1)} \leftarrow \bm{z}_t^{(k)} - \eta_t^{(k)} \left[ \bm{g} + \rho (\bm{z}_t^{(k)} - \bm{x}_{t-1} - \bm{v}_t) \right]$.}
            \EndFor
            \State{$\bm{z}_{t-1} \leftarrow \bm{z}_t^{(K)}$.}
            \State{$\bm{v}_{t-1} \leftarrow \bm{v}_{t} + \rho (\bm{x}_{t-1}-\bm{z}_{t-1})$.}
        \EndFor
        \Ensure{Guided sample $\bm{x}_0$.}
    \end{algorithmic}
\end{algorithm}

\subsection{Hyperparameter Settings}
\label{appendix:params}

We summarize the hyperparameter configurations used in our experiments. Table~\ref{tab:hyperparams_ffhq} lists the settings for image restoration tasks on FFHQ $256 \times 256$~\citep{karras2019ffhq}, and Table~\ref{tab:hyperparams_celeb_hq} lists those for the conditional generation tasks on CelebA-HQ $256 \times 256$~\citep{karras2018celeba}.

\subsection{Evaluation Metrics}
\label{appendix:metrics}

\paragraph{Image Restoration.}
We use Peak Signal-to-Noise Ratio (PSNR)~\citep{jain1989dip}, Structural Similarity Index Measure (SSIM)~\citep{wang2004ssim}, Learned Perceptual Image Patch Similarity (LPIPS)~\citep{zhang2018lpips}, and Fréchet Inception Distance (FID)~\citep{heusel2017fid} as evaluation metrics.  
PSNR assesses pixel-wise fidelity, while SSIM captures structural similarity based on local luminance and contrast.  
LPIPS evaluates perceptual similarity using deep features from pre-trained networks, better reflecting human judgment.  
FID measures distributional alignment between generated and real images in the Inception feature space.

\paragraph{Conditional Generation.}
We adopt FID and Kernel Inception Distance (KID)~\citep{binkowski2018kid} for all conditional generation tasks.  
For segmentation map guidance, we report the mean Intersection-over-Union (mIoU)~\citep{long2015semanticsegment} between the predicted segmentation maps of the generated and reference images.  
For sketch and FaceID guidance, we use the $\ell_2$ distance between the outputs of pre-trained parsing models applied to the generated and reference images.

\section{Detailed Analysis of DiffRGD}
\label{appendix:analysis}

\subsection{Distribution Preservation}
\label{appendix:distribution}

To evaluate distribution preservation, we design an experiment grounded in the forward diffusion process.  Recall that under forward diffusion, the latent variable follows $\bm{x}_t^{\mathrm{fwd}} 
\sim \mathcal{N}\bigl(\sqrt{\bar{\alpha}_t}\bm{x}_0,\; (1-\bar{\alpha}_t)\bm{I}_n\bigr)$. Standardizing the noise component yields
\begin{equation*}
    \frac{\bm{x}_t^{\mathrm{fwd}} - \sqrt{\bar{\alpha}_t}\bm{x}_0}
         {\sqrt{1-\bar{\alpha}_t}}
    \sim 
    \mathcal{N}(\bm{0}, \bm{I}_n),
    \text{ which implies }
    \left\|
    \frac{\bm{x}_t^{\mathrm{fwd}} - \sqrt{\bar{\alpha}_t}\bm{x}_0}
         {\sqrt{1-\bar{\alpha}_t}}
    \right\|_2^2
    \sim 
    \chi^2(n).
\end{equation*}

In the ideal case, a guidance method that preserves the sampling distribution of $\bm{x}_t^\text{guided}$ should not alter the statistics of the injected noise. For simplicity, we can check whether $e:=\mathbb{E}\left[\tfrac{1}{n}\|\tfrac{\bm{x}_t^\text{guided}-\sqrt{\bar{\alpha}_t}\bm{x}_0}{\sqrt{1-\bar{\alpha}_t}}\|_2^2\right] = 1$ during guided sampling. We evaluate five timesteps and report their empirical mean of $e$ over 100 samples. The table below shows that DiffRGD and DPS are comparable at earlier timesteps, while at mid-to-late timesteps DiffRGD is noticeably closer to 1 than DPS. The results indicate that DiffRGD can preserve the intended per-step geometry better, thereby reducing accumulated drift along the sample trajectory compared to DPS.

\begin{table*}[h]
    \centering
    \setlength{\tabcolsep}{6pt}
    \caption{Distribution preservation comparison between DPS and DiffRGD.}
    \begin{tabular}{lcccccc}
        \toprule
        DDIM Step & $t=100$ & $t=80$ & $t=60$ & $t=40$ & $t=20$ \\
        \midrule
        $e_\text{DPS}$ & $\first{0.9998}$ & $\first{0.9999}$ & $1.0119$ & $1.1017$ & $1.8007$ \\
        \rowcolor{blue!10}
        $e_\text{DiffRGD}$ & $0.9996$ & $1.0002$ & $\first{1.0063}$ & $\first{1.0480}$ & $\first{1.3743}$ \\
        \bottomrule
    \end{tabular}
\end{table*}

\subsection{Empirical Convergence of the Riemannian Gradient Norm}
\label{appendix:inner_loop}

Figure~\ref{fig:gradient_norm} illustrates the relationship over the number of inner iterations and the norm of the Riemannian gradient at different timesteps. While Theorem~\ref{theorem:convergence} guarantees a convergence rate of $\mathcal{O}(1 / \sqrt{k})$, the figure shows that the norm of the Riemannian gradient already becomes sufficiently small when $K = 3$ in practice. This supports our experiment settings in 
Table~\ref{tab:hyperparams_ffhq} and Table~\ref{tab:hyperparams_celeb_hq}.

\begin{figure}[H]
    \centering
    \includegraphics[width=0.7\linewidth]{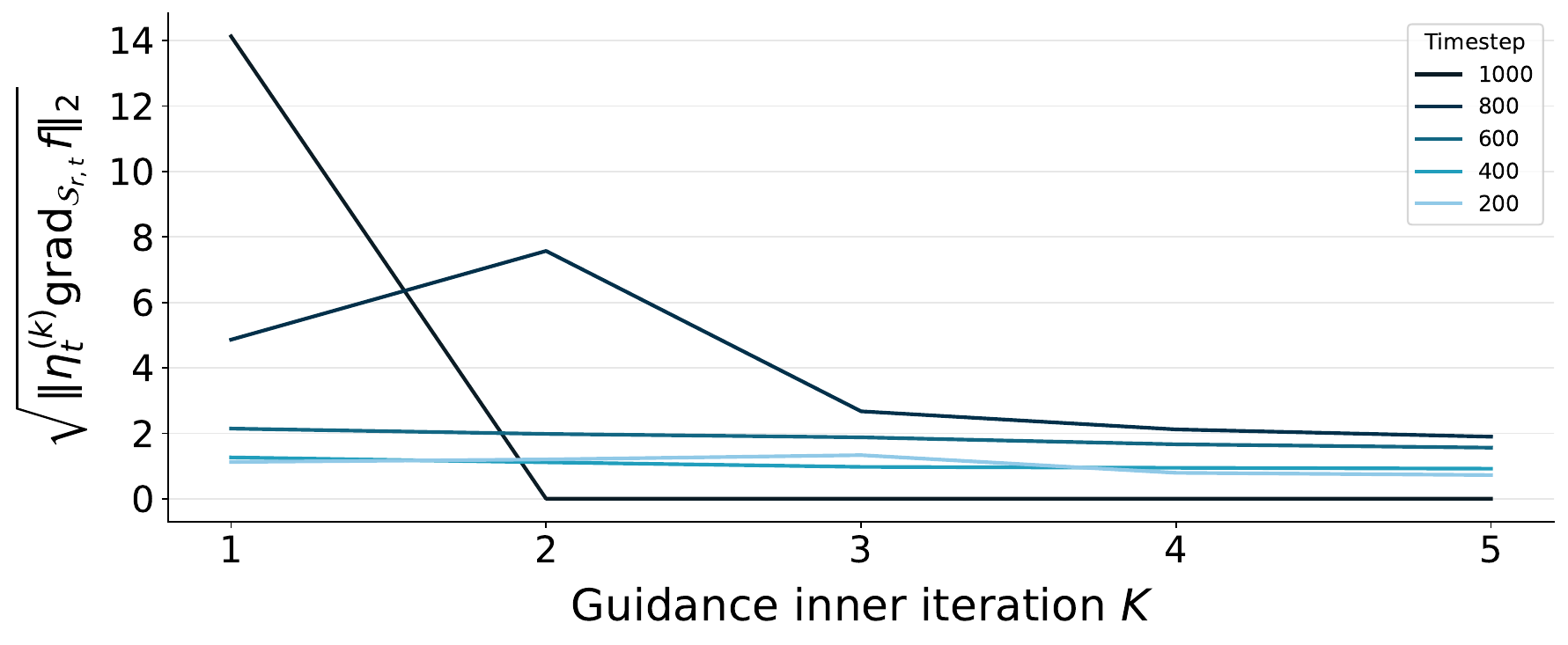}
    \caption{Norm of Riemannian gradient versus guidance inner iteration across different timesteps.
    For visualization purposes, we plot the square root of the gradient norm.
    }
    \label{fig:gradient_norm}
    \vspace*{-10pt}
\end{figure}

\section{Additional Results}
\label{appendix:exp}

We present additional qualitative results for image restoration in Figure~\ref{fig:appendix_exp_image_restoration} and for conditional generation in Figure~\ref{fig:appendix_exp_control}. We also provide qualitative results under different timestep settings in Figure~\ref{fig:exp_different_timestep_combined} for comparison. Our method can perform well even with 100 timesteps. These results are consistent with the quantitative results in Table~\ref{tab:ffhq_ddim100_2}, and Table~\ref{tab:ffhq_ddim100_1} of the main paper. Additional experiments are presented in the following subsections.

\begin{table*}[h]
    \centering
    \setlength{\tabcolsep}{6pt}
    \caption{Quantitative comparison of DDIM 100 sampling steps on 100 samples from the FFHQ $256 \times 256$ validation set for random inpainting and super-resolution tasks.
    }
    \resizebox{0.9\linewidth}{!}{
    \centering
    \begin{tabular}{lcccccc}
        \toprule
        \multirow{2.5}{*}{\textbf{Method}} & \multicolumn{3}{c}{\textbf{Random Inpainting}} & \multicolumn{3}{c}{\textbf{Super-Resolution $4 \times$}} \\
        \cmidrule(lr){2-4} \cmidrule(lr){5-7}
        ~ & \textbf{PSNR $\uparrow$} & \textbf{SSIM $\uparrow$} & \textbf{LPIPS $\downarrow$} & \textbf{PSNR $\uparrow$} & \textbf{SSIM $\uparrow$} & \textbf{LPIPS $\downarrow$} \\
        \midrule
        DPS~\citep{chung2023dps} & $26.09$ & $0.765$ & $0.266$ & $22.46$ & $0.643$ & $0.345$ \\
        MPGD~\citep{he2024mpgd} & $28.13$ & $0.764$ & $0.232$ & $24.23$ & $0.618$ & $0.345$ \\    
        DSG~\citep{yang2024dsg} & $\second{31.25}$ & $\second{0.854}$ & $\second{0.155}$ & $\second{27.36}$ & $\second{0.753}$ & $\second{0.285}$ \\
        ADMMDiff~\citep{zhang2025admmdiff} & $29.09$ & $0.809$ & $0.235$ & $25.12$ & $0.709$ & $0.314$ \\
        \rowcolor{blue!10}
        DiffRGD (Ours) & $\first{34.29}$ & $\first{0.927}$ & $\first{0.127}$ & $\first{28.35}$ & $\first{0.806}$ & $\first{0.231}$ \\
        \bottomrule
    \end{tabular}
    }
    \label{tab:ffhq_ddim100_2}
\end{table*}

\subsection{Image Denoising}
\label{appendix:image_denoise}

Image denoising is a fundamental image restoration task that aims to recover clean images from noisy observations. Traditional approaches often rely on explicit image priors, such as total variation regularization~\citep{rudin1992tv, huang2024denoising}, while diffusion models possess inherent denoising capabilities through their learned generative priors. We compare the denoising performance of DPS~\citep{chung2023dps}, DSG~\citep{yang2024dsg}, and our proposed DiffRGD on ImageNet $256 \times 256$ validation dataset. The evaluation is conducted under three different noise levels. As shown in Table~\ref{tab:denoising}, DiffRGD consistently achieves the best performance across all noise settings. Qualitative results are illustrated in Figure~\ref{fig:denoising}. In the top example with $\sigma = 0.3$, both DPS and DSG exhibit noticeable residual noise, whereas DiffRGD restores the image with minimal artifacts. In the bottom example, DiffRGD reconstructs the mouse with significantly better visual quality than DPS and DSG.

\begin{table}[h]
    \centering
    \setlength{\tabcolsep}{6pt}
    \centering
    \caption{Quantitative comparison for denoising tasks on ImageNet 256 $\times$ 256.}
    \resizebox{0.9\linewidth}{!}{
    \begin{tabular}{lcccccc}
        \toprule
        \multirow{2.5}{*}{\textbf{Method}} & \multicolumn{2}{c}{$\sigma = 0.1$} & \multicolumn{2}{c}{$\sigma = 0.2$} & \multicolumn{2}{c}{$\sigma = 0.3$} \\
        \cmidrule(lr){2-3} \cmidrule(lr){4-5} \cmidrule(lr){6-7}
       ~ & \textbf{PSNR $\uparrow$} & \textbf{SSIM $\uparrow$} & \textbf{PSNR $\uparrow$} & \textbf{SSIM $\uparrow$} & \textbf{PSNR $\uparrow$} & \textbf{SSIM $\uparrow$} \\
        \midrule
        DPS~\citep{chung2023dps} & $25.19$ & $0.699$ & $23.14$ & $0.607$ & $20.98$ & $0.476$ \\
        DSG~\citep{yang2024dsg} & $27.80$ & $0.774$ & $25.70$ & $0.694$ & $24.05$ & $0.607$ \\
        \rowcolor{blue!10}
        DiffRGD (Ours) & $\first{33.13}$ & $\first{0.892}$ & $\first{30.66}$ & $\first{0.845}$ & $\first{29.13}$ & $\first{0.808}$ \\
        \bottomrule
    \end{tabular}
    }
    \label{tab:denoising}
\end{table}

\subsection{Style-Guided Generation}
\label{appendix:style_guide}

We evaluate our method on the style-guided generation task using the Stable Diffusion model~\citep{rombach2022ldm}. Given a style image $\bm{y}$, we extract task-specific features using a pre-trained CLIP encoder $\psi_\theta$~\citep{radford2021clip}. To measure stylistic alignment, we construct the Gram matrix $G(\cdot)$ of the extracted features. The style guidance loss is then defined as:
\begin{equation*}
\mathcal{L}_\text{style}(\hat{\bm{x}}_0, \bm{y}) = \|G(\psi_\theta(\hat{\bm{x}}_0)) - G(\psi_\theta(\bm{y}))\|_2.
\end{equation*}

\paragraph{Implementation Details.}
We select five distinct style images from the WikiArt~\citep{saleh2015wikiart} dataset and generate 20 animal-related prompts using GPT-5.1~\citep{openai2025gpt5}, resulting in 100 unique style-content pairs. All generations are performed using the DDIM sampler with 150 sampling steps. We use Stable Diffusion v1.5 from HuggingFace as the shared backbone for all methods. This choice differs from the original MPGD and FreeDoM settings, where the methods rely on LDMs trained on specific datasets\footnote{\url{https://github.com/KellyYutongHe/mpgd_pytorch/blob/main/nonlinear/SD_style/scripts/download_models.sh}}. However, we argue that practical inference-time guidance methods should be able to adapt to general foundation models rather than specialized dataset-dependent models.

\paragraph{Results.}
Table~\ref{tab:style_transfer} reveals that while all methods yield comparable CLIP scores, DiffRGD consistently achieves lower Style scores, indicating superior alignment with the target style. Figure~\ref{fig:style_transfer} further illustrates that DiffRGD produces generations that are both visually richer and stylistically more faithful than those of baseline methods.

\begin{table}[h]
    \caption{Quantitative comparison for style-guided generation with Stable Diffusion.}
    \centering
    \setlength{\tabcolsep}{6pt}
    \begin{tabular}{lcc}
        \toprule
       \textbf{Method} & \textbf{CLIP-Score $\uparrow$} & \textbf{Style-Score $\downarrow$} \\
        \midrule
        FreeDoM~\citep{yu2023freedom} & $0.183$ & $6.348$ \\ 
        MPGD~\citep{he2024mpgd} & $\first{0.186}$ & $6.018$  \\
        DSG~\citep{yang2024dsg} & $0.184$ & $5.151$ \\ 
        \rowcolor{blue!10}
        DiffRGD (Ours) & $0.185$ & $\first{4.848}$ \\ 
        \bottomrule
    \end{tabular}
    \label{tab:style_transfer}
    \vspace{-15pt}
\end{table}

\section{Computational Cost and Runtime}
\label{appendix:computation_time}

Let $F$ denote the runtime of a forward pass, $B$ the runtime of a backward pass, $P$ the runtime of projection, $T$ the number of DDIM sampling steps, $K$ the number of inner iterations, and $I$ the guidance interval. Table~\ref{tab:time_complexity} provides the forward/backward pass counts and time complexity of different methods. For MPGD, the complexity additionally accounts for the projection cost.

\begin{table}[h]
    \centering
    \caption{Forward/backward pass counts and time complexity of different methods.}
    \setlength{\tabcolsep}{6pt}
    \label{tab:time_complexity}
    \begin{tabular}{lccl}
        \toprule
        \textbf{Method} & \textbf{\#Forward} & \textbf{\#Backward} & \textbf{Time Complexity}\\
        \midrule
        DPS~\citep{chung2023dps} & $T$ & $T$ & $\mathcal{O}((F + B) T)$ \\
        MPGD~\citep{he2024mpgd} & $T$ & $T$ & $\mathcal{O}((F + B + P) T)$ \\
        DSG~\citep{yang2024dsg} & $T$ & $T/I$ & $\mathcal{O}((F + B / I) T)$ \\
        ADMMDiff~\citep{zhang2025admmdiff} & $(K+1)T$ & $KT$ & $\mathcal{O}((F (K+1) + BK ) T)$ \\
        \rowcolor{blue!10}
        DiffRGD (Ours) & $T + KT/I$ & $KT/I$ & $\mathcal{O}( (F + (F+B)K / I) T)$ \\
        \bottomrule
    \end{tabular}
\end{table}

\begin{table}[h]
    \setlength{\tabcolsep}{6pt}
    \centering
    \caption{Inference time per image in seconds for image restoration tasks.}
    \begin{tabular}{lccccc}
        \toprule
        Task & DPS & MPGD & DSG & ADMMDiff & DiffRGD \\
        \midrule
        Inpainting & 22.09 & 21.91 & 22.01 & 39.61 & 32.76 \\
        Super-Resolution $4 \times$ & 22.18 & 22.06 & 22.69 & 39.42 & 27.77 \\
        Gaussian Deblurring & 22.35 & 22.22 & 22.00 & 39.64 & 32.77 \\
        Motion Deblurring & 23.11 & 22.92 & 22.20 & 40.42 & 36.09 \\
        \midrule
        Avg. Time (Seconds) & 22.43 & 22.28 & 22.23 & 39.77 & 32.35 \\
        \bottomrule
    \end{tabular}
    \label{tab:image_restora_time}
\end{table}

\begin{table}[h]
    \setlength{\tabcolsep}{6pt}
    \centering
    \caption{Inference time per image in seconds for conditional generation tasks.}
    \begin{tabular}{lcccc}
        \toprule
        Task & FreeDoM & DSG & ADMMDiff & DiffRGD \\
        \midrule
        Segmentation Map & 4.35 & 4.36 & 45.40 & 16.18 \\
        Sketch & 4.17 & 4.18 & 43.59 & 15.68 \\
        FaceID & 4.90 & 4.90 & 50.34 & 17.72 \\
        \midrule
        Avg. Time (Seconds) & 4.47 & 4.48 & 46.44 & 16.53 \\
        \bottomrule
    \end{tabular}
    \label{tab:condition_gen_time}
\end{table}

All inference times are measured in seconds per image on a single NVIDIA GeForce RTX 4090 GPU. For image restoration tasks, we report the runtime corresponding to Table~\ref{tab:ffhq_ddim1000} of the main paper in Table~\ref{tab:image_restora_time}, where DiffRGD achieves a $\mathbf{1.23\times}$ speedup over ADMMDiff. For conditional generation tasks, we report the runtime corresponding to Table~\ref{tab:condition_gen} of the main paper in Table~\ref{tab:condition_gen_time}, where DiffRGD achieves a $\mathbf{2.8\times}$ speedup over ADMMDiff.

Although DiffRGD requires more computation time than DPS, MPGD, and DSG, it consistently outperforms these baselines in terms of task performance. Moreover, the qualitative results shown in Figure~\ref{fig:exp_condition} of the main paper demonstrate that DiffRGD produces superior conditional generation quality, highlighting its practical value for real-world applications.

\section{Code and Software Availability}
\label{appendix:code}

We integrate several representative inference-time guidance methods, including DPS\footnote{\url{https://github.com/DPS2022/diffusion-posterior-sampling}}, MPGD\footnote{\url{https://github.com/KellyYutongHe/mpgd_pytorch}}, DSG\footnote{\url{https://github.com/LingxiaoYang2023/DSG2024}}, and ADMMDiff \footnote{\url{https://github.com/youyuan-zhang/ADMMDiff}}, into the Diffusers framework~\citep{platen2022diffusers}, enabling flexible application of these guidance strategies across different tasks. During implementation, we identify and fix several issues in the original GitHub implementations to ensure faithful and stable comparisons. In particular, we find a discrepancy between the ADMMDiff codebase\footnote{\url{https://github.com/youyuan-zhang/ADMMDiff/blob/master/Linear/guided_diffusion/gaussian_diffusion.py}} and the algorithm described in the paper: the gradient inside the inner loop is not recomputed with respect to the updated sample. We correct this implementation and tune the corresponding hyperparameters for a fair comparison. We believe these improvements provide a useful resource for the open-source community.

\section{Limitation and Discussion}
\label{appendix:discussion}

\paragraph{Dependence on Strong Pre-trained Model.}
Inference-time guidance methods rely on a strong pre-trained diffusion model to provide meaningful guidance. In particular, DSG, ADMMDiff, and our DiffRGD impose constraints directly on the model-predicted mean, making their effectiveness dependent on the accuracy of the underlying diffusion model. When the model suffers from bias or produces inaccurate mean predictions, such errors can propagate to the final guided samples. MPGD is similarly affected, with additional potential bias introduced by the imperfect auxiliary autoencoder.

\paragraph{Computational Overhead from RGD Inner Loop.}
Our method incurs additional computational overhead due to its inner-loop Riemannian gradient descent. As a result, the inference speed is largely determined by the number of inner iterations required for effective guidance.

\paragraph{Limited Extension to Deterministic Sampling and Flow Matching.}
DiffRGD relies on the Gaussian transition structure of diffusion models. Therefore, it cannot be directly applied to flow matching or deterministic samplers, which do not generally preserve the same transition geometry. Extending DiffRGD to these settings would require redesigning the underlying geometric formulation for the corresponding generative process.

\paragraph{Future Work.}
In principle, the sampling radius $r$ could be incorporated into an alternating optimization scheme, where $r$ is jointly optimized while being constrained to remain close to the sampled radius. Such a strategy may potentially yield more optimal samples. However, this approach would introduce additional computational overhead. Exploring more principled optimization schemes for $r$ while maintaining computational efficiency remains an interesting direction for future work. In this paper, we instead adopt a simple and efficient heuristic by sampling $r$ to maintain practical efficiency.

\clearpage

\begin{figure}[t]
    \centering
    \includegraphics[width=\linewidth]{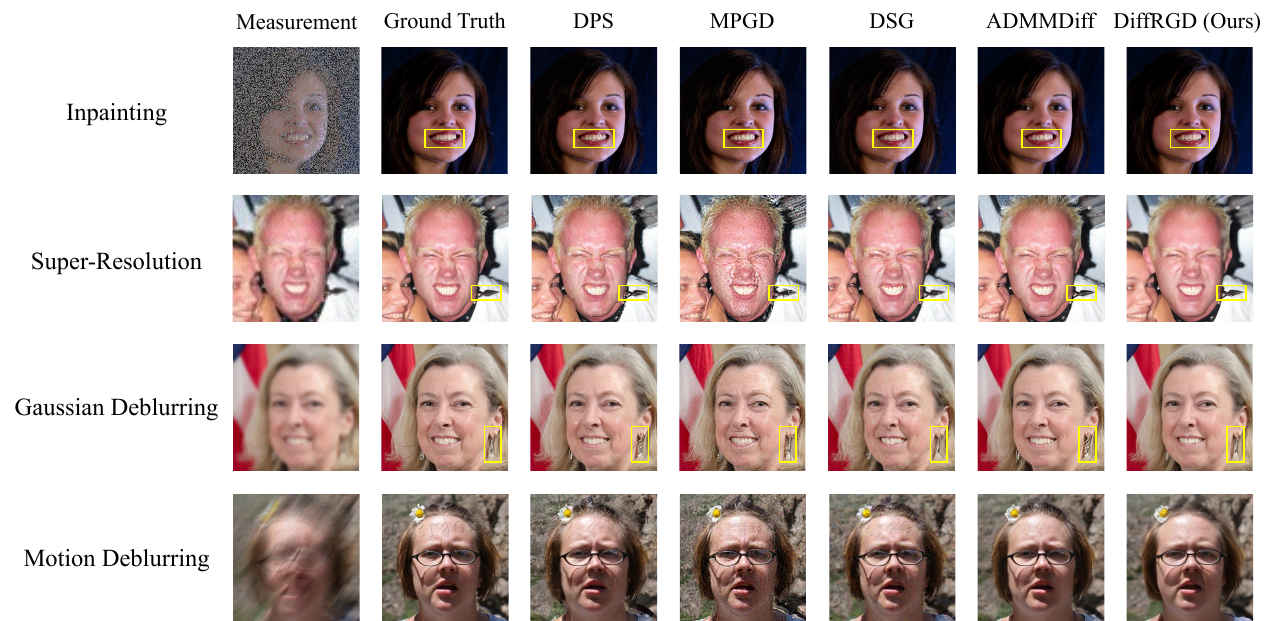}
    \caption{Qualitative comparisons for image restoration tasks on FFHQ 256 $\times$ 256 validation set. Please refer to the highlighted regions for detailed comparison.}
    \label{fig:appendix_exp_image_restoration}
\end{figure}

\begin{figure}[t]
    \centering
    \includegraphics[width=\linewidth]{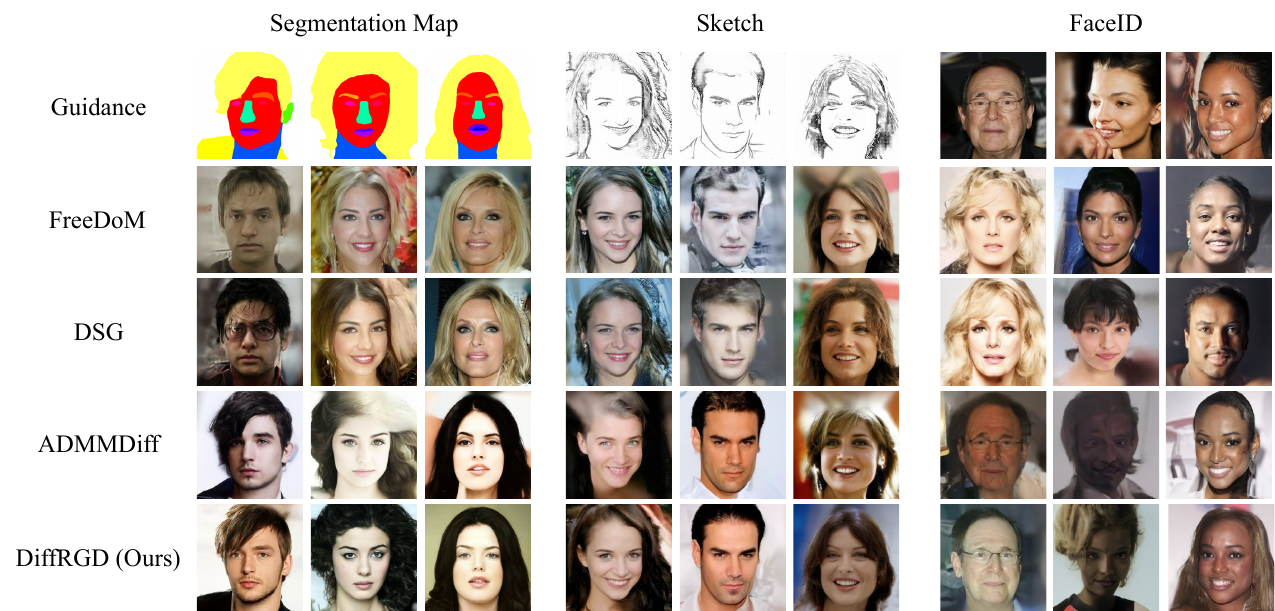}
    \caption{Qualitative comparison for conditional generation tasks on CelebA-HQ 256 × 256 validation set. (a) segmentation maps to human faces (b) sketches to human faces (c) FaceID to human faces.}
    \label{fig:appendix_exp_control}
\end{figure}

\begin{figure*}[h]
    \centering
    \begin{minipage}[t]{\linewidth}
        \centering
        \includegraphics[width=\linewidth]{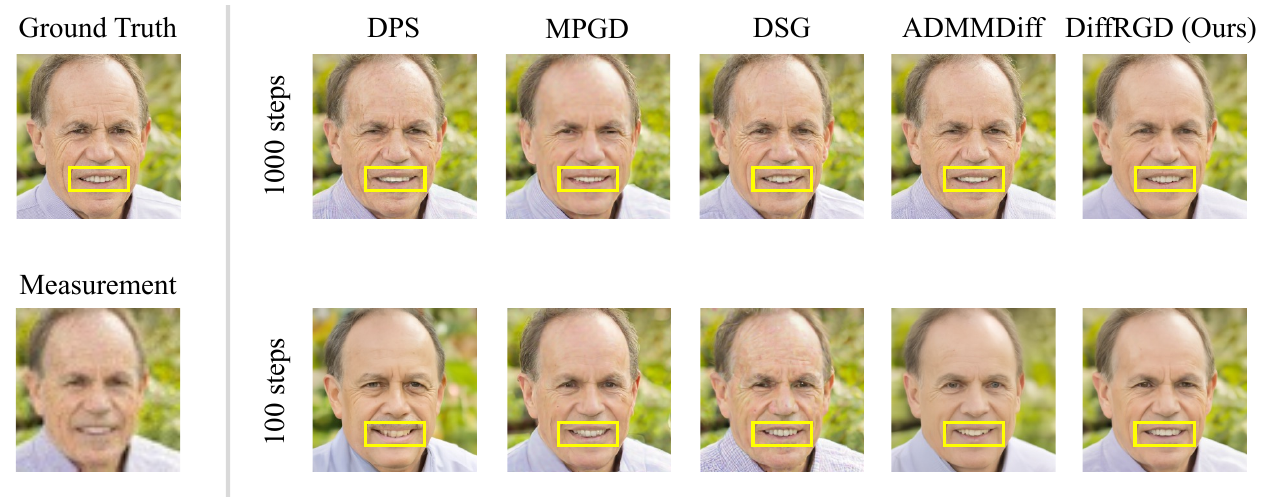}
        {(a) Super-resolution.}
        \label{fig:exp_different_timestep1}
    \end{minipage}

    \vspace{1em}

    \begin{minipage}[t]{\linewidth}
        \centering
        \includegraphics[width=\linewidth]{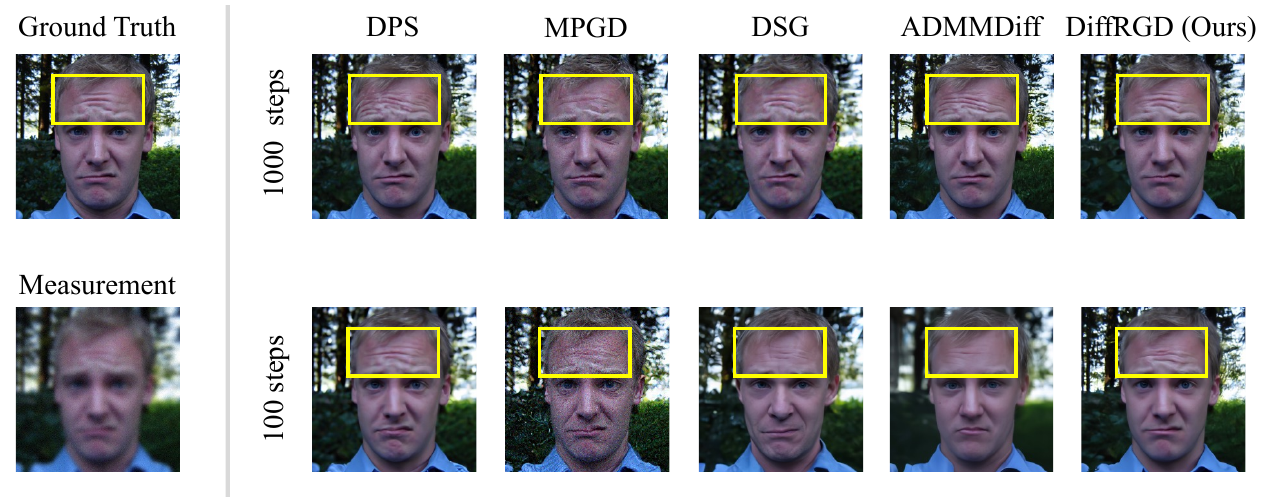}
        {(b) Gaussian deblurring.}
        \label{fig:exp_different_timestep2}
    \end{minipage}

    \caption{Qualitative comparisons on different DDIM timesteps: Compared to 1000 timesteps, our method can perform well even in 100 steps, please refer to the highlighted regions for detailed comparison. Although the quality is close to ADMMDiff, our method demands less time.}
    \label{fig:exp_different_timestep_combined}
\end{figure*}

\begin{figure*}[h]
    \centering
    \includegraphics[width=\linewidth]{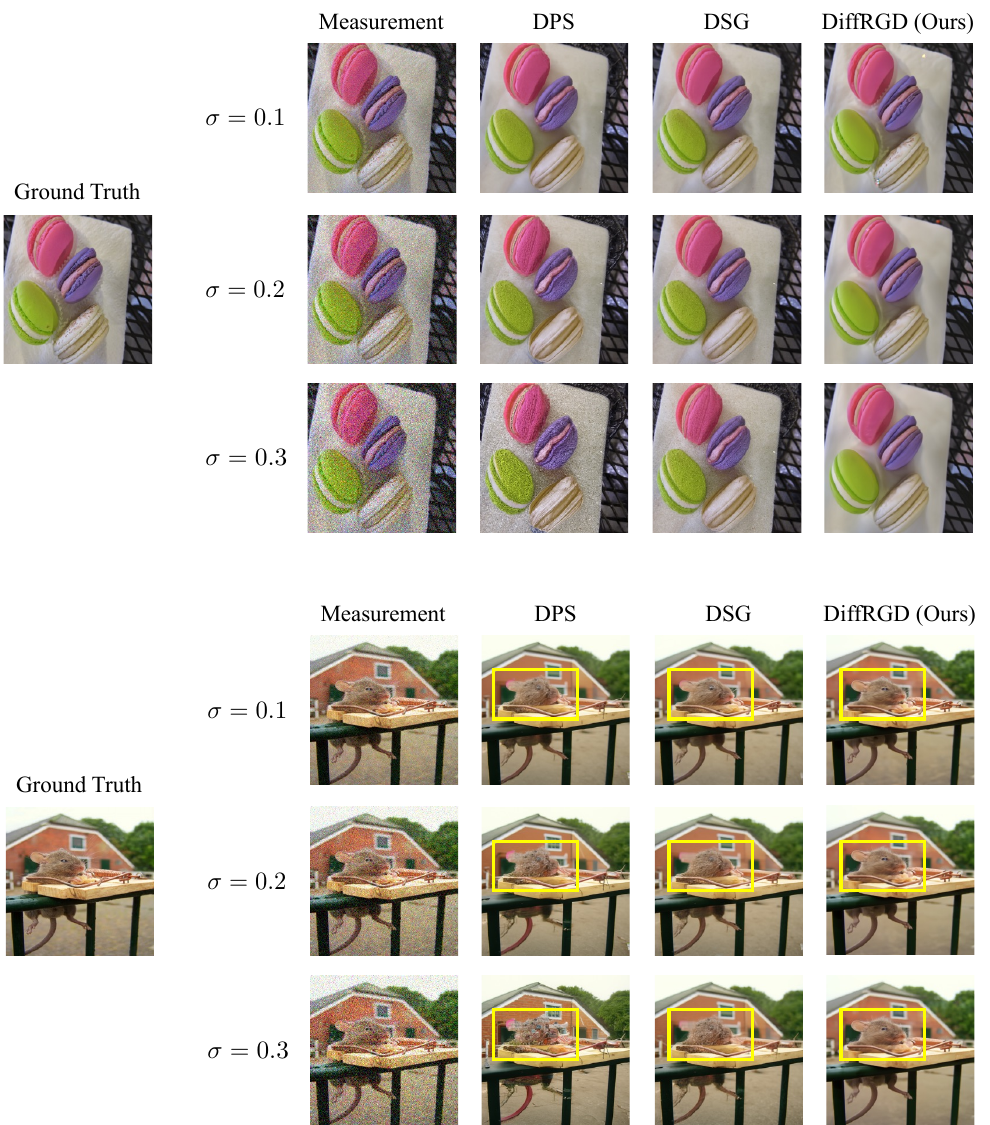}
    \caption{Qualitative comparison for image denoising tasks on ImageNet 256 $\times$ 256. Please refer to the highlighted regions for detailed comparison.}
    \label{fig:denoising}
\end{figure*}

\begin{figure*}[t]
    \centering
    \includegraphics[width=\linewidth]{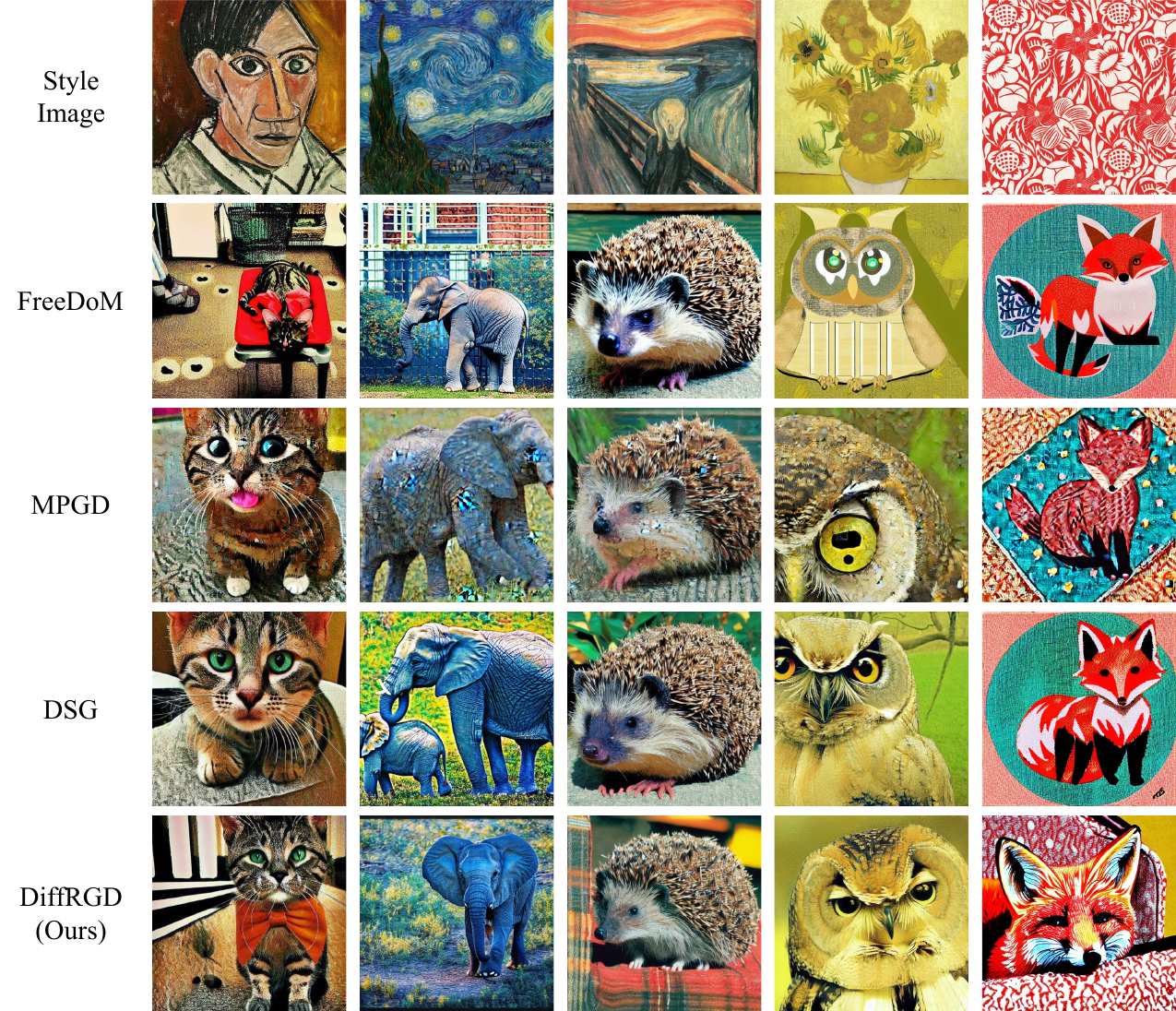}
    \caption{Qualitative comparison on the style-guided generation task using the Stable Diffusion v1.5 at a resolution of $512 \times 512$.}
    \label{fig:style_transfer}
\end{figure*}

\clearpage
\begin{table*}[t]
    \caption{Hyperparameters used in our image restoration tasks on FFHQ $256 \times 256$. The guidance interval specifies how frequently guidance is applied during the sampling process; for example, a value of $i$ indicates that guidance is performed once every $i$ steps, which helps accelerate the sampling.}
    \centering
    \resizebox{\linewidth}{!}{
    \begin{tabular}{llcccc}
        \toprule
        \textbf{Methods} & \textbf{Tasks} & \textbf{DDIM steps $T$} & \textbf{Guidance strength $\eta_t$} & \textbf{Guidance inner iteration $K$} & \textbf{Guidance interval $I$} \\
        \midrule
        \multirow{8}{*}{DPS}
            & Inpainting          & $1000$ & $1$ & $1$ & $1$ \\
            & Super-Resolution    & $1000$ & $1$ & $1$ & $1$ \\
            & Gaussian Deblurring & $1000$ & $1$ & $1$ & $1$ \\
            & Motion Deblurring   & $1000$ & $1$ & $1$ & $1$ \\
        \cmidrule(lr){2-6}
            & Inpainting          & $100$ & $1$ & $1$ & $1$ \\
            & Super-Resolution    & $100$ & $1$ & $1$ & $1$ \\
            & Gaussian Deblurring & $100$ & $1$ & $1$ & $1$ \\
            & Motion Deblurring   & $100$ & $1$ & $1$ & $1$ \\
        \midrule
        \multirow{8}{*}{MPGD} 
            & Inpainting          & $1000$ & $30 / \sqrt{\bar{\alpha}_t}$ & $1$ & $1$ \\
            & Super-Resolution    & $1000$ & $30 / \sqrt{\bar{\alpha}_t}$ & $1$ & $1$ \\
            & Gaussian Deblurring & $1000$ & $50 / \sqrt{\bar{\alpha}_t}$ & $1$ & $1$ \\
            & Motion Deblurring   & $1000$ & $50 / \sqrt{\bar{\alpha}_t}$ & $1$ & $1$ \\
        \cmidrule(lr){2-6}
            & Inpainting          & $100$ & $30 / \sqrt{\bar{\alpha}_t}$ & $1$ & $1$ \\
            & Super-Resolution    & $100$ & $30 / \sqrt{\bar{\alpha}_t}$ & $1$ & $1$ \\
            & Gaussian Deblurring & $100$ & $50 / \sqrt{\bar{\alpha}_t}$ & $1$ & $1$ \\
            & Motion Deblurring   & $100$ & $50 / \sqrt{\bar{\alpha}_t}$ & $1$ & $1$ \\
        \midrule
        \multirow{8}{*}{DSG} 
            & Inpainting          & $1000$ & $0.2$ & $1$ & $5$ \\
            & Super-Resolution    & $1000$ & $0.2$ & $1$ & $20$ \\
            & Gaussian Deblurring & $1000$ & $0.2$ & $1$ & $5$ \\
            & Motion Deblurring   & $1000$ & $0.2$ & $1$ & $5$ \\
        \cmidrule(lr){2-6}
            & Inpainting          & $100$ & $0.2$ & $1$ & $1$ \\
            & Super-Resolution    & $100$ & $0.1$ & $1$ & $2$ \\
            & Gaussian Deblurring & $100$ & $0.1$ & $1$ & $1$ \\
            & Motion Deblurring   & $100$ & $0.1$ & $1$ & $1$ \\
        \midrule
        \multirow{8}{*}{ADMMDiff} 
            & Inpainting          & $1000$ & $(\eta_t, \rho)=(1, 0.9)$ & $1$ & $1$ \\
            & Super-Resolution    & $1000$ & $(\eta_t, \rho)=(1, 0.9)$ & $1$ & $1$ \\
            & Gaussian Deblurring & $1000$ & $(\eta_t, \rho)=(1, 0.9)$ & $1$ & $1$ \\
            & Motion Deblurring   & $1000$ & $(\eta_t, \rho)=(1, 0.9)$ & $1$ & $1$ \\
        \cmidrule(lr){2-6}
            & Inpainting          & $100$ & $(\eta_t, \rho)=(2.4, 0.5)$ & $10$ & $1$ \\
            & Super-Resolution    & $100$ & $(\eta_t, \rho)=(2.4, 0.5)$ & $10$ & $1$ \\
            & Gaussian Deblurring & $100$ & $(\eta_t, \rho)=(2.4, 0.5)$ & $10$ & $1$ \\
            & Motion Deblurring   & $100$ & $(\eta_t, \rho)=(2.4, 0.5)$ & $10$ & $1$ \\
        \midrule
        \multirow{8}{*}{DiffRGD} 
            & Inpainting          & $1000$ & $5$   & $3$ & $5$ \\
            & Super-Resolution    & $1000$ & $5$   & $3$ & $10$ \\
            & Gaussian Deblurring & $1000$ & $5$   & $3$ & $5$ \\
            & Motion Deblurring   & $1000$ & $2.5$ & $3$ & $4$ \\
        \cmidrule(lr){2-6}
            & Inpainting          & $100$ & $5$   & $3$ & $1$ \\
            & Super-Resolution    & $100$ & $5$   & $3$ & $1$ \\
            & Gaussian Deblurring & $100$ & $7$   & $3$ & $1$ \\
            & Motion Deblurring   & $100$ & $2.5$ & $3$ & $1$ \\
        \bottomrule
    \end{tabular}
    }
    \label{tab:hyperparams_ffhq}
\end{table*}

\begin{table*}[t]
    \caption{Hyperparameters used in our conditional generation task on CelebA-HQ $256 \times 256$.}
    \centering
    \resizebox{\linewidth}{!}{
    \begin{tabular}{llcccc}
        \toprule
        \textbf{Methods} & \textbf{Guidance} & \textbf{DDIM steps $T$} & \textbf{Guidance strength $\eta_t$} & \textbf{Guidance inner iteration $K$} & \textbf{Guidance interval $I$} \\
        \midrule
        \multirow{3}{*}{FreeDoM}
            & Segmentation & $100$ & $10$  & $1$ & $1$ \\
            & Sketch       & $100$ & $0.5$ & $1$ & $1$ \\
            & FaceID       & $100$ & $5$   & $1$ & $1$ \\
        \midrule
        \multirow{3}{*}{DSG} 
            & Segmentation & $100$ & $0.1$   & $1$ & $1$ \\
            & Sketch       & $100$ & $0.08$  & $1$ & $1$ \\
            & FaceID       & $100$ & $0.025$ & $1$ & $1$ \\
        \midrule
        \multirow{3}{*}{ADMMDiff} 
            & Segmentation   & $100$ & $(\eta_t, \rho)=(5, 0.1)$ & $10$  & $1$ \\
            & Sketch         & $100$ & $(\eta_t, \rho)=(2, 0.5)$ & $10$ & $1$ \\
            & FaceID         & $100$ & $(\eta_t, \rho)=(3, 0.2)$ & $10$ & $1$ \\
        \midrule
        \multirow{3}{*}{DiffRGD} 
            & Segmentation & $100$ & $30$ & $3$ & $1$ \\
            & Sketch       & $100$ & $2$  & $3$ & $1$ \\
            & FaceID       & $100$ & $5$  & $3$ & $1$ \\
        \bottomrule
    \end{tabular}
    }
    \label{tab:hyperparams_celeb_hq}
\end{table*}

\end{document}